\let\color@begingroup\relax
   \let\color@endgroup\relax}{}%
\def\fix@ieeecolor@hbox#1{%
  \hbox{\color@begingroup#1\color@endgroup}}
\patchcmd\@makecaption{\hbox}{\fix@ieeecolor@hbox}{}{\FAILED}
\patchcmd\@makecaption{\hbox}{\fix@ieeecolor@hbox}{}{\FAILED}
\setlist[description]{style=nextline}
\algrenewcommand\algorithmicrequire{\textbf{Input:}}
\newcommand{\CommentState}[1]{\Statex\hspace{\algorithmicindent}{\color{blue}// #1}}
\newlength{\continueindent}
\newcommand*{\ALG@customparshape}{\parshape 2 \leftmargin \linewidth \dimexpr\ALG@tlm+\continueindent\relax \dimexpr\linewidth+\leftmargin-\ALG@tlm-\continueindent\relax}
\apptocmd{\ALG@beginblock}{\ALG@customparshape}{}{\errmessage{failed to patch}}
\DeclareMathOperator*{\argmin}{arg\,min}
\DeclareMathOperator{\prox}{prox}
\DeclareMathOperator{\refl}{refl}
\DeclareMathOperator{\proj}{proj}
\newtheorem{definition}{Definition}
\newtheorem{remark}{Remark}
\newtheorem{proposition}{Proposition}
\newtheorem{assumption}{Assumption}
\newtheorem{lemma}{Lemma}
\newtheorem{corollary}{Corollary}
\crefname{appsec}{Appendix}{Appendices}
\crefname{assumption}{Assumption}{Assumptions}
\crefname{corollary}{Corollary}{Corollaries}
\crefname{algorithm}{Algorithm}{Algorithms}
\crefname{proposition}{Proposition}{Propositions}
\crefname{lemma}{Lemma}{Lemmas}
\crefname{definition}{Definition}{Definitions}
\crefname{remark}{Remark}{Remarks}
\crefname{figure}{Figure}{Figures}
\crefname{table}{Table}{Tables}
\newcommand{\norm}[1]{\left\lVert#1\right\rVert}
\newcommand{\ev}[1]{\mathbb{E}\left[#1\right]}
\newcommand{\N}{\mathbb{N}}
\newcommand{\R}{\mathbb{R}}
\newcommand{\C}{\mathbb{C}}
\newcommand{\e}{\mathbold{e}}
\newcommand{\tv}{\mathbold{t}}
\newcommand{\x}{\mathbold{x}}
\newcommand{\y}{\mathbold{y}}
\newcommand{\vv}{\mathbold{v}}
\newcommand{\w}{\mathbold{w}}
\newcommand{\z}{\mathbold{z}}
\renewcommand{\Im}{\mathbold{I}}
\newcommand{\1}{\pmb{1}}
\newcommand{\0}{\boldsymbol{0}}
\def\lmin{{\underline{\lambda}}}
\def\lmax{{\bar{\lambda}}}
\newcommand{\I}{\mathcal{I}}
\newcommand{\T}{\mathcal{T}}
\newcommand{\X}{\mathcal{X}}
\newcommand{\Ne}{{N_\mathrm{e}}} 
\newcommand{\scs}[3]{\mathcal{F}_{#1, #2}\left( #3 \right)} 
\newcommand{\ccp}[1]{\scs{0}{\infty}{#1}} 
\newcommand{\ubar}[1]{\underaccent{\bar}{#1}}
\newcommand{\pmin}{\ubar{p}}
\newcommand{\pmax}{\bar{p}}
\algrenewcommand\algorithmicensure{\textbf{Output:}}
\newcommand{\cmark}{\ding{51}\xspace}
\newcommand{\xmark}{\ding{55}\xspace}
\newcommand{\alg}{{Fed-PLT}\xspace}
\newcommand{\tg}{t_\mathrm{G}}
\newcommand{\tc}{t_\mathrm{C}}
\title{Enhancing Privacy in Federated Learning\\through Local Training}
\author{Nicola Bastianello, Changxin Liu, Karl H. Johansson
\thanks{This work was partially supported by the European Union’s Horizon Research and Innovation Actions programme under grant agreement No. 101070162, partially by Swedish Research Council Distinguished Professor Grant 2017-01078 Knut and Alice Wallenberg Foundation Wallenberg Scholar Grant, and partially by the NSERC Postdoctoral Fellowship PDF-577876-2023.}
\thanks{N. Bastianello and K. H. Johansson are with the School of Electrical Engineering and Computer Science, and Digital Futures, KTH Royal Institute of Technology, Sweden.}
\thanks{C. Liu is with the Key Laboratory of Smart Manufacturing in Energy Chemical Process, Ministry of Education, East China University of Science and Technology, Shanghai 200237, China.}
\thanks{Corresponding author: Nicola Bastianello {\tt\small nicolba@kth.se}}
}
\begin{document}

\maketitle

\begin{abstract}
In this paper we propose the federated learning algorithm Fed-PLT to overcome the challenges of \textit{(i)} expensive communications and \textit{(ii)} privacy preservation.
We address \textit{(i)} by allowing for both partial participation and local training, which significantly reduce the number of communication rounds between the central coordinator and computing agents.
The algorithm matches the state of the art in the sense that the use of local training demonstrably does not impact accuracy. Additionally, agents have the flexibility to choose from various local training solvers, such as (stochastic) gradient descent and accelerated gradient descent.
Further, we investigate how employing local training can enhance privacy, addressing point \textit{(ii)}. In particular, we derive differential privacy bounds and highlight their dependence on the number of local training epochs.
We assess the effectiveness of the proposed algorithm by comparing it to alternative techniques, considering both theoretical analysis and numerical results from a classification task.
\end{abstract}

\begin{IEEEkeywords}
Federated learning, privacy, local training, partial participation
\end{IEEEkeywords}


\section{Introduction}\label{sec:introduction}
Federated learning has proved to be a powerful framework to enable private cooperative learning \cite{li_federated_2020,zhang_survey_2021,gafni_federated_2022}. Indeed, federated learning allows a set of agents to pool their resources together to train a more accurate model than any single agent could. This is accomplished by relying on a central coordinator that iteratively collects and aggregates locally trained models -- without the need to share raw data.
Federated learning has been successfully deployed in different applications, including healthcare \cite{rauniyar_federated_2023}, wireless networks \cite{qian_distributed_2022}, mobile devices, Internet-of-Things, and sensor networks \cite{li_review_2020}, power systems \cite{cheng_review_2022}, and intelligent transportation \cite{zhang_federated_2024}, to name a few.

Clearly, all these applications may involve potentially sensitive and/or proprietary training data. Therefore, a fundamental challenge in the federated set-up is to \textit{guarantee the privacy of these data} throughout the learning process.
However, despite the fact that in a federated architecture the agents are not required to share raw data, various attacks have emerged to extract private information from the trained model or its predictions \cite{nasr2019comprehensive,lyu_privacy_2024,liu_survey_2024}. Thus, employing the federated learning architecture is not sufficient to ensure privacy, and tailored techniques need to be developed \cite{girgis2021shuffled}.

Besides privacy preservation, another fundamental challenge in federated learning is the \textit{expensiveness of communications}.
%
Indeed, the models exchanged by the agents are oftentimes high dimensional, especially when training (deep) neural networks \cite{gafni_federated_2022}, resulting in resource-intensive communications.
Thus, one of the most important objectives is to design federated learning algorithms that employ as few communications as possible, while still guaranteeing good accuracy of the trained model.
Different heuristics have been proposed to accomplish this objective, with the main ones arguably being \textit{partial participation}\footnote{A.k.a. \textit{client selection}.} \cite{nemeth_snapshot_2022}, \textit{local training} \cite{grudzien_can_2023}, and quantization/compression \cite{zhao_towards_2023}. In this paper we will focus on the former two, which we review in the following.

As the name suggests, when applying partial participation only a subset of the agents is selected at each iteration to communicate their local models to the coordinator. This allows to reduce the number of agent-coordinator communications required \cite{nemeth_snapshot_2022}.
Local training proposes a complementary approach, by aiming to increase the time between communication rounds. The idea is for the agents to perform multiple epochs of local training (\textit{e.g.} via stochastic gradient descent) for each round of communication.
However, this may result in slower convergence due to the phenomenon of \textit{client drift} \cite{grudzien_can_2023}. This phenomenon also implies that the locally trained models drift apart, becoming biased towards the local data distribution, which is not (necessarily) representative of the overall distribution \cite[sec.~3.2]{karimireddy_scaffold_2020}. This further results in lower accuracy of the global trained model.
In this paper we will therefore focus on designing a federated learning algorithm that allows for both partial participation and local training -- without compromising accuracy.

\smallskip

Summarizing the discussion above, 
the objective of this paper is two-fold. On the one hand, we are interested in designing a privacy-preserving federated learning algorithm. On the other hand, we require that such algorithm employs partial participation and local training to reduce the number of communications, but without affecting accuracy.
The approach we propose to pursue these objectives is to leverage local training to enhance privacy, highlighting the synergy of communication reduction and privacy.
%
%
In particular, we offer the following contributions:
\begin{enumerate}
    \item We design a federated learning algorithm, abbreviated as \alg, based on the Peaceman-Rachford splitting method \cite{bauschke_convex_2017}, which allows for both partial participation and local training.
    In particular, the algorithm allows for a subset of the agents to activate randomly at each iteration, and communicate the results of their local training to the coordinator. The number of local training epochs is a tunable parameter of the algorithm.
    Moreover, the algorithm can be applied to both smooth and composite problems (with a smooth and non-smooth part). Non-smooth regularization terms can be used to encode prior knowledge in the structure of trained models \cite{iutzeler_nonsmoothness_2020}. Thus designing a federated algorithm that can efficiently solve this class of problems is important.

    \item We prove exact convergence (in mean) of \alg in strongly convex, composite scenarios when the agents use gradient descent during local training. In particular, we show there always exists a choice of parameters which ensures convergence.
    This result shows that the use of local training does not give rise to the client drift phenomenon, and thus it does not degrade accuracy of the trained model.
    Additionally, we analyze \alg's convergence when the agents employ accelerated gradient descent for local training, showing the flexibility of the proposed approach.
    
    \item We further analyze the convergence of \alg when stochastic gradient descent or noisy gradient descent are used for local training. The result characterizes convergence to a neighborhood of the optimal solution which depends on the variance of the noise.
    We then highlight the privacy benefits of using local training with a noisy solver. In particular, we characterize the differential privacy of the disclosed final model as a function of the number of local training epochs and the gradient noise. Our result shows that the privacy leakage does not exceed a finite bound for any choice of the algorithm's parameters, resulting in a tight privacy analysis.

    \item We apply \alg for a logistic regression problem, and compare its performance with a number of state-of-the-art federated algorithm that employ local training.
    We observe that the proposed algorithm compares quite favorably with the state of the art, outperforming it in some scenarios. We further apply the algorithms to a nonconvex problem, for which \alg also has good performance, pointing to interesting future research directions.
    We then analyze \alg by discussing its performance for different values of the tunable parameters and number of agents. One interesting observation -- aligning with \cite{mishchenko_proxskip_2022} -- is that the convergence rate is not a monotonically decreasing function of the number of local epochs (as may be suggested by \alg's foundation in the Peaceman-Rachford splitting). Rather, there exists an optimal, finite number of epochs, at least in this specific numerical set-up.
\end{enumerate}

\subsection{Related works}\label{subsec:theoretical-comparison}
(Private) federated learning is an active and growing area of research, and providing a comprehensive overview of the literature is outside the scope of this paper. In the following, we provide a brief overview of privacy in federated learning, and discuss the state of the art in federated algorithms employing local training.

Different privacy techniques have been proposed, with the most studied being homomorphic encryption \cite{zhang2020batchcrypt}, secret-sharing protocol \cite{zhang2021network}, and differential privacy (DP) \cite{dwork2014algorithmic}.
The latter stands out as easier to implement as compared to the other two methodologies.
This is indeed the case as in differential privacy the aim is to obscure local training data by introducing perturbations to the models shared by the agents. This is achieved by simply randomizing learning algorithms with additive noise (often following a normal or Laplace distribution). But despite the simplicity of such privacy mechanism, DP  provides a solid framework that enables the derivation of theoretical privacy guarantees \cite{chourasia2021differential,chaudhuri2011differentially}.
We remark that recently, the use of random additive noise to improve privacy has also been complemented with other techniques such as quantization and compression of communications \cite{amiri2021compressive,pmlr-v180-chaudhuri22a,lang_joint_2023}. This is similar to the concept of employing local training to enhance privacy, as explored in this paper.
Owing to its effectiveness, differential privacy has been fruitfully applied in diverse engineering fields, ranging from federated learning \cite{gafni_federated_2022,wei_federated_2020,noble_differentially_2022} and distributed optimization \cite{han_differentially_2017}, to control system design \cite{han_privacy_2018}, estimation \cite{le_ny_differentially_2014}, and consensus-seeking \cite{wang2024differentially}. Thus we will follow these works in using DP throughout this paper.

We turn now to reviewing related works on federated learning with local training.
An early example is that of FedSplit, proposed in \cite{pathak_fedsplit_2020}; this algorithm shares the same foundation -- the Peaceman-Rachford splitting -- as the proposed \alg. However, FedSplit does not adopt the local training initialization which allows to prove exact convergence as we do in this paper. Moreover, it does not explicitly allow for different local solvers (including privacy-preserving ones), for partial participation, or for composite cost function. Thus, the algorithmic design and results in the following extend beyond the scope of \cite{pathak_fedsplit_2020}.
More recent algorithms which explicitly incorporate local training are instead FedPD \cite{zhang_fedpd_2021}, FedLin \cite{mitra_linear_2021}, TAMUNA \cite{condat_tamuna_2023}, LED \cite{alghunaim_local_2023}, 5GCS \cite{grudzien_can_2023}.
Their comparison with the proposed \alg is summarized in \cref{tab:comparison-algorithms}.

Let us start by evaluating the memory footprint of the different algorithms. In \cref{tab:comparison-algorithms} we report the number of models that need to be stored by agents and coordinator in between communication rounds. As we can see, except for \cite{grudzien_can_2023}, all other methods -- including \alg -- require the storage of $2 N + 1$ variables, \textit{i.e.} two models per agent and one at the coordinator.

Turning now to the local training, we can classify algorithms based on the solvers that can be applied by the agents. For \cite{zhang_fedpd_2021,mitra_linear_2021,condat_tamuna_2023,alghunaim_local_2023} only (stochastic) gradient descent is allowed, while \alg also allows for accelerated (S)GD and noisy GD (to enhance privacy). Only \cite{grudzien_can_2023} potentially allows for a broader class of solvers, provided that they verify a specific descent inequality. Such inequality is verified by SGD and variance reduced SGD, but \cite{grudzien_can_2023} does not evaluate if accelerated SGD verifies it.
The compared algorithms also differ in the number of local training epochs that the agents can perform. On one side, we have \cite{condat_tamuna_2023} which implements local training by randomly choosing whether to perform a communication round at each gradient step. On the other we have \cite{zhang_fedpd_2021,mitra_linear_2021,alghunaim_local_2023,grudzien_can_2023} and \alg, which apply a deterministic number of local epochs $\Ne$. However, for \cite{zhang_fedpd_2021,grudzien_can_2023} convergence is guaranteed only if $\Ne$ is lower bounded by $\ubar{N}_\mathrm{e}$, which may be larger than $1$.

Partial participation is allowed only in the proposed \alg and \cite{condat_tamuna_2023,grudzien_can_2023}, and not in \cite{zhang_fedpd_2021,mitra_linear_2021,alghunaim_local_2023}.
Furthermore, only \alg includes a privacy preserving mechanism with a rigorous differential privacy guarantee (derived in \cref{sec:privacy}). All the other methods do not evaluate the privacy perspective.

Finally, the algorithms also differ in the classes of problems for which a convergence guarantee can be derived. \cite{zhang_fedpd_2021,mitra_linear_2021,alghunaim_local_2023} present more general results for non-convex problems, while the assumption of strong convexity is required in the analysis of \alg and \cite{condat_tamuna_2023,grudzien_can_2023}.
However, \alg can be applied on composite (convex) problems, differently from all the other algorithms that are designed for smooth (but possibly non-convex) problems only.

In summary, this paper presents an algorithmic design platform for federated learning which allows for communication efficiency (through partial participation and local training), customizable local computations, privacy preserving mechanisms, and which can be applied to composite costs (although our analysis currently supports only strongly convex problems). As discussed above, many of these features have been presented separately in the literature; however, our proposed solution merges them in a unified and modular framework.

\begin{table*}[!ht]
\centering
\caption{Comparison of different federated algorithms with local training.}
\label{tab:comparison-algorithms}
    \begin{tabular}{cccccccc}
    \hline
    Algorithm [Ref.] & \thead{\# variables \\ stored} & local solvers & \# local epochs & \thead{partial \\ participation} & privacy & \thead{problem \\ assumptions} \\
    \hline
    \textbf{\alg} [this work] & $2 N + 1$& \thead{(S)GD, Acc. (S)GD \\ noisy GD} & $\geq 1$ & \cmark & \cmark & str. convex, composite \\
    FedPD \cite{zhang_fedpd_2021} & $2 N + 1$ & (S)GD & $\geq \ubar{N}_\mathrm{e}$ & \xmark & \xmark & non-convex \\
    FedLin \cite{mitra_linear_2021} & $2 N + 1$ & (S)GD & $\geq 1$ & \xmark & \xmark & non-convex \\
    TAMUNA \cite{condat_tamuna_2023} & $2 N + 1$ & (S)GD & random$^\dagger$ & \cmark & \xmark & str. convex \\
    LED \cite{alghunaim_local_2023} & $2 N + 1$ & (S)GD & $\geq 1$ & \xmark & \xmark & non-convex \\
    5GCS \cite{grudzien_can_2023} & $N + 3$ & any$^*$ & $\geq \ubar{N}_\mathrm{e}$ & \cmark & \xmark & str. convex \\
    \hline
    \end{tabular}
    \\\vspace{0.1cm}
    $^\dagger$ The number of local epochs at time $k$ is drawn from $\text{Geom}(p)$. \quad
    $^*$ Provided that the solver verifies a specific descent inequality.
\end{table*}

\smallskip

\paragraph*{Notation}
In the paper we denote the class of \textit{convex, closed and proper} functions as $\ccp{\R^n}$. The class of \textit{$\lmin$-strongly convex and $\lmax$-smooth} functions is denoted by $\scs{\lmin}{\lmax}{\R^n}$.
Bold letters represent vectors \textit{e.g.} $\x \in \R^n$, and calligraphic capital letters represent operators \textit{e.g.} $\T : \R^n \to \R^n$.
$\1$ represents the vector or matrix of all ones.
Let $\C \subset \R^n$, then we denote the indicator function of the set by $\iota_\C$, with $\iota_\C(\x) = 0$ if $\x \in \C$, $+\infty$ otherwise.

\section{Preliminaries}\label{sec:preliminaries}

\subsection{Operator theory}
In the following we review some notions in \textit{operator theory}
\footnote{Following the convention in the convex optimization literature, we use the term \textit{operator}, even though in finite-dimensional spaces \textit{mapping} would be more appropriate.};
for a comprehensive background we refer to \cite{bauschke_convex_2017,ryu_primer_2016}.
The central goal in operator theory is to find the fixed points of a given operator.

\begin{definition}[Fixed points]
Consider the operator $\T : \R^n \to \R^n$, we say that $\bar{\x} \in \R^n$ is a \emph{fixed point} of $\T$ if $\bar{\x} = \T \bar{\x}$.
\end{definition}

The properties of an operator characterize its fixed points and dictate what techniques can be used to find them. An especially amenable class is that of contractive operators, defined below, for which the Banach-Picard theorem \cite[Theorem~1.50]{bauschke_convex_2017}, reported in \cref{lem:banach-picard}, holds.

\begin{definition}[Contractive operators]\label{def:contractive-operator}
The operator $\T$ is \emph{$\zeta$-contractive}, $\zeta \in (0, 1)$, if:
\begin{equation}
	\norm{\T \x - \T \y} \leq \zeta \norm{\x - \y}, \quad \forall \x, \y \in \R^n.
\end{equation}
\end{definition}

\begin{lemma}[Banach-Picard]\label{lem:banach-picard}
Let $\T : \R^n \to \R^n$ be $\zeta$-contractive; then $\T$ has a unique fixed point $\bar{\x}$, which is the limit of the sequence generated by:
\begin{equation}\label{eq:banach-picard}
    \x^{\ell+1} = \T \x^\ell, \quad \ell \in \N, \ \x^0 \in \R^n.
\end{equation}
In particular, it holds
$
    \norm{\x^\ell - \bar{\x}} \leq \zeta^\ell \norm{\x^0 - \bar{\x}}.
$
\end{lemma}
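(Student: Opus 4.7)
The plan is the classical one: first show the iterates form a Cauchy sequence, then invoke completeness of $\R^n$ to obtain a limit, then use continuity of $\T$ (which follows from contractivity) to identify the limit as a fixed point, then argue uniqueness, and finally derive the geometric convergence rate. Uniqueness is immediate from the contraction property: if $\bar{\x}$ and $\bar{\y}$ were both fixed points, then $\norm{\bar{\x} - \bar{\y}} = \norm{\T\bar{\x} - \T\bar{\y}} \leq \zeta \norm{\bar{\x} - \bar{\y}}$, which forces $\norm{\bar{\x} - \bar{\y}} = 0$ because $\zeta \in (0,1)$. So the real content is establishing existence and the rate for the Picard iteration \eqref{eq:banach-picard}.

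For existence, the key step I would carry out first is to apply contractivity to consecutive iterates, obtaining $\norm{\x^{\ell+1} - \x^\ell} = \norm{\T\x^\ell - \T\x^{\ell-1}} \leq \zeta \norm{\x^\ell - \x^{\ell-1}}$, and then iterate this inequality backward to get $\norm{\x^{\ell+1} - \x^\ell} \leq \zeta^\ell \norm{\x^1 - \x^0}$. Then I would use the triangle inequality on a telescoping sum: for any $m > \ell$,
\begin{equation*}
    \norm{\x^m - \x^\ell} \leq \sum_{k=\ell}^{m-1} \norm{\x^{k+1} - \x^k} \leq \norm{\x^1 - \x^0} \sum_{k=\ell}^{m-1} \zeta^k \leq \frac{\zeta^\ell}{1 - \zeta} \norm{\x^1 - \x^0}.
\end{equation*}
The right-hand side vanishes as $\ell \to \infty$ uniformly in $m$, so $\{\x^\ell\}$ is Cauchy. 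Since $\R^n$ is complete, the sequence converges to some limit $\bar{\x}$.

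To confirm that $\bar{\x}$ is a fixed point, I would pass to the limit in the recursion $\x^{\ell+1} = \T\x^\ell$, using that $\T$ is continuous (indeed $\zeta$-Lipschitz by \cref{def:contractive-operator}), yielding $\bar{\x} = \T\bar{\x}$. The stated rate then follows by a direct induction: applying contractivity gives $\norm{\x^{\ell+1} - \bar{\x}} = \norm{\T\x^\ell - \T\bar{\x}} \leq \zeta \norm{\x^\ell - \bar{\x}}$, and iterating $\ell$ times yields $\norm{\x^\ell - \bar{\x}} \leq \zeta^\ell \norm{\x^0 - \bar{\x}}$.

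I do not expect any genuine obstacle here; the proof is textbook and each step is mechanical once the telescoping estimate is in place. The only mild subtlety to flag is the need for completeness of the ambient space, which in this finite-dimensional setting is automatic but is the substantive ingredient that makes the Cauchy condition sufficient for convergence.
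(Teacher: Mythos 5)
Your proof is correct and is the classical Banach fixed point argument; the paper does not prove this lemma itself but cites it from \cite{bauschke_convex_2017}, where essentially the same Cauchy-sequence/telescoping proof is given. No gaps: uniqueness, existence via completeness, and the geometric rate are all handled as in the standard treatment.
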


\subsection{Application to convex optimization}\label{subsec:optimization-operator}
Operator theory can be applied to solve convex optimization problems by reformulating them into fixed point problems. In the following we review two operator-based solvers that will be useful throughout the paper.

\begin{lemma}[Gradient descent]\label{lem:gradient-descent}
Let $f \in \scs{\lmin}{\lmax}{\R^n}$, then the gradient descent operator
$$
    \T = \I - \rho \nabla f(\cdot)
$$
with $\rho \in (0, 2 / \lmax)$ is $\zeta$-contractive, $\zeta = \max\{ |1 - \rho \lmin|, |1 - \rho \lmax| \}$ \cite[p.~15]{ryu_primer_2016}, and the unique fixed point of $\T$ coincides with the minimizer of $f$.
\end{lemma}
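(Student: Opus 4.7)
The statement contains two independent assertions to be proved: the characterization of the (unique) fixed point and the contraction estimate. I would handle them separately.

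The fixed-point claim is immediate. Rearranging $\bar{\x} = \T \bar{\x}$ gives $\rho \nabla f(\bar{\x}) = \0$, and since $\rho > 0$ this is equivalent to $\nabla f(\bar{\x}) = \0$. By Fermat's rule applied to the convex $f$, this in turn characterizes $\bar{\x} \in \arg\min f$, and strong convexity of $f$ forces this minimizer to be unique. So $\T$ has exactly one fixed point and it coincides with the minimizer.

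For the contraction estimate the plan is to expand
\begin{equation*}
\|\T \x - \T \y\|^2 = \|\x - \y\|^2 - 2\rho \langle \nabla f(\x) - \nabla f(\y), \x - \y\rangle + \rho^2 \|\nabla f(\x) - \nabla f(\y)\|^2
\end{equation*}
and feed in regularity estimates on $\nabla f$ induced by $f \in \scs{\lmin}{\lmax}{\R^n}$. The cleanest route is to peel off the strongly convex part: the function $g := f - (\lmin/2)\|\cdot\|^2$ is convex and $(\lmax - \lmin)$-smooth, so by Baillon--Haddad its gradient $\nabla g = \nabla f - \lmin \Im$ is both $(\lmax-\lmin)$-Lipschitz and $1/(\lmax-\lmin)$-cocoercive. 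Rewriting $\T\x - \T\y = (1 - \rho\lmin)(\x - \y) - \rho (\nabla g(\x) - \nabla g(\y))$ and re-expanding the square isolates the same three ingredients in a form tailored to these two bounds on $\nabla g$.

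The main obstacle, and the step that actually pins down $\zeta = \max\{|1-\rho\lmin|, |1-\rho\lmax|\}$, is a case split on $\rho$. For $\rho \in (0, 2/(\lmin+\lmax)]$ I expect the cocoercivity of $\nabla g$ to make the coefficient of $\|\nabla g(\x) - \nabla g(\y)\|^2$ non-positive after absorbing the cross term, leaving the residual $(1-\rho\lmin)^2\|\x-\y\|^2$. For $\rho \in (2/(\lmin+\lmax), 2/\lmax)$ that coefficient becomes positive, so instead I would upper-bound $\|\nabla g(\x) - \nabla g(\y)\|^2$ by $(\lmax - \lmin)^2 \|\x - \y\|^2$ using Lipschitzness; a short algebraic simplification then collapses the total coefficient exactly to $(1-\rho\lmax)^2$. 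Combining the two regimes reproduces both branches of $\zeta$, and the hypothesis $\rho < 2/\lmax$ guarantees $\zeta < 1$. (As a sanity check I would also carry out the $C^2$ argument in parallel: writing $\nabla f(\x) - \nabla f(\y) = H(\x,\y)(\x - \y)$ via the integral mean value theorem, with $\lmin\Im \preceq H \preceq \lmax\Im$, the operator norm of $\Im - \rho H$ is exactly $\zeta$, which confirms the bound is tight in the smooth case.)
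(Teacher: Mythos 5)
The paper does not actually prove this lemma: it is imported verbatim from the literature (the citation to Ryu--Boyd, p.~15, \emph{is} the paper's ``proof''), so there is no in-paper argument to match yours against. Your proof is correct and is essentially the standard one. The fixed-point part is fine as written. For the contraction part, your route through $g := f - (\lmin/2)\norm{\cdot}^2$ and Baillon--Haddad is exactly equivalent to invoking Nesterov's interpolation inequality $\langle \nabla f(\x)-\nabla f(\y), \x-\y\rangle \geq \tfrac{\lmin\lmax}{\lmin+\lmax}\norm{\x-\y}^2 + \tfrac{1}{\lmin+\lmax}\norm{\nabla f(\x)-\nabla f(\y)}^2$, which is how the cited reference obtains the rate; the two regimes you identify ($\rho \lessgtr 2/(\lmin+\lmax)$) and the collapse to $(1-\rho\lmin)^2$ and $(1-\rho\lmax)^2$ respectively both check out, provided that in the second regime you keep the cocoercivity-absorbed form of the cross term (coefficient $\rho\bigl(\rho - 2(1-\rho\lmin)/(\lmax-\lmin)\bigr)$ on $\norm{\nabla g(\x)-\nabla g(\y)}^2$) before applying Lipschitzness --- which is what you describe.

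One small point to tidy up: your absorption of the cross term via cocoercivity requires $1-\rho\lmin\geq 0$. Since $\rho<2/\lmax$ does not exclude $\rho>1/\lmin$ (e.g.\ when $\lmax<2\lmin$), there is a sub-case of your second regime where $1-\rho\lmin<0$; there the cross term has the wrong sign for that absorption, but Cauchy--Schwarz plus $\norm{\nabla g(\x)-\nabla g(\y)}\leq(\lmax-\lmin)\norm{\x-\y}$ gives $\norm{\T\x-\T\y}\leq\bigl((\rho\lmin-1)+\rho(\lmax-\lmin)\bigr)\norm{\x-\y}=(\rho\lmax-1)\norm{\x-\y}$, i.e.\ the same branch of $\zeta$. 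Also note the degenerate case $\lmin=\lmax$ (where the cocoercivity constant $1/(\lmax-\lmin)$ is undefined) is harmless since then $\nabla g(\x)-\nabla g(\y)=\0$. With these two remarks your argument is complete and recovers the exact constant $\zeta=\max\{|1-\rho\lmin|,|1-\rho\lmax|\}$, matching the cited result.
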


Clearly, the gradient descent algorithm can be applied to solve the smooth problem $\min_\x f(\x)$. To solve the \textit{composite} problem $\min_\x f(\x) + g(\x)$, with $f$ smooth and $g$ non-smooth, we need to resort to different algorithms, such as the \textit{Peaceman-Rachford splitting} (PRS) of \cref{lem:prs}. To introduce the PRS we first need to define the proximal and reflective operators.

\begin{definition}[Proximal, reflective operators]\label{def:proximal}
Let $f \in \ccp{\R^n}$ and $\rho > 0$, then the \textit{proximal} operator of $f$ at $\y \in \R^n$ with penalty $\rho$ is defined as
$$
    \prox_{\rho f}(\y) = \argmin_{\x \in \R^n} \left\{ f(\x) + \frac{1}{2 \rho} \norm{\x - \y}^2 \right\}.
$$
The \textit{reflective} operator of $f$ at $\y \in \R^n$ is then defined as
$
    \refl_{\rho f}(\y) = 2 \prox_{\rho f}(\y) - \y.
$
\end{definition}

Notice that by definition, the proximal of a convex function $f$ is unique, as it is the minimizer of a strongly convex problem. Moreover, if $f$ is the indicator function of a convex set $\C$, then its proximal coincides with the \textit{projection} onto the set, denoted by $\proj_\C$.
In the following we call a function $f$ \textit{proximable} if its proximal (and hence reflective) can be computed in closed form or in a computationally inexpensive way, see \cite{parikh_proximal_2014} for some examples.

We are now ready to introduce the PRS.

\begin{lemma}[Peaceman-Rachford splitting]\label{lem:prs}
Consider the optimization problem $\bar{\x} = \argmin_\x f(\x) + g(\x)$ with $f \in \scs{\lmin}{\lmax}{\R^n}$ and $g \in \ccp{\R^n}$.
The corresponding Peaceman-Rachford splitting is characterized by the operator \cite[Proposition~28.8]{bauschke_convex_2017}
$$
    \T = \refl_{\rho f} \circ \refl_{\rho g}
$$
for $\rho > 0$.
The operator $\T$ is $\zeta$-contractive \cite[Theorem~2]{giselsson_linear_2017}, with
$$
    \zeta = \max\left\{ \left\vert \frac{1 - \rho \lmax}{1 + \rho \lmax} \right\vert, \left\vert \frac{1 - \rho \lmin}{1 + \rho \lmin} \right\vert \right\},
$$
and its unique fixed point $\bar{\z}$ is such that $\bar{\x} = \prox_{\rho g}(\bar{\z}) = \prox_{\rho f}(2 \bar{\x} - \bar{\z})$.
\end{lemma}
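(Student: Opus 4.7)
The plan is to decompose the PRS operator $\T = \refl_{\rho f} \circ \refl_{\rho g}$ into its two reflective factors, bound each individually, compose the bounds, and finally translate the fixed point equation into the first-order optimality condition of the composite problem. This mirrors the standard route of analysis for splitting methods, but the specific form of $\zeta$ requires exploiting both strong convexity and smoothness of $f$ simultaneously.

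First, for the factor involving $g$: since $g \in \ccp{\R^n}$, the resolvent $\prox_{\rho g}$ is firmly nonexpansive, which is equivalent to $\refl_{\rho g} = 2 \prox_{\rho g} - \I$ being nonexpansive. This is standard and gives a Lipschitz constant of $1$ without exploiting any further structure of $g$. Next, the bulk of the work is to show that $\refl_{\rho f}$ is $\zeta$-contractive with the claimed $\zeta$. My approach would be to write $\x_i = \prox_{\rho f}(\y_i)$, $i=1,2$, use the optimality relation $\y_i - \x_i = \rho \nabla f(\x_i)$, and exploit strong convexity and smoothness to bound the divided-difference of $\nabla f$ between $\lmin$ and $\lmax$. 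After some algebra, the reflected resolvent $\refl_{\rho f}(\y_1) - \refl_{\rho f}(\y_2)$ can be controlled by a scalar multiplier whose worst case over the spectrum $[\lmin, \lmax]$ is exactly $\zeta$. Composing with the nonexpansive $\refl_{\rho g}$ preserves this rate, giving the claimed contractivity of $\T$ and hence existence and uniqueness of $\bar{\z}$ by \cref{lem:banach-picard}.

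For the fixed point characterization, set $\bar{\x} := \prox_{\rho g}(\bar{\z})$, so that $\refl_{\rho g}(\bar{\z}) = 2 \bar{\x} - \bar{\z}$. Substituting into $\bar{\z} = \refl_{\rho f}(2 \bar{\x} - \bar{\z})$ and unfolding the definition of $\refl_{\rho f}$ yields $\prox_{\rho f}(2 \bar{\x} - \bar{\z}) = \bar{\x}$, which is the second equality in the claim. Applying first-order optimality to both proximal operators gives $(\bar{\z} - \bar{\x})/\rho \in \partial g(\bar{\x})$ and $(\bar{\x} - \bar{\z})/\rho = -\nabla f(\bar{\x})$; adding these produces $0 \in \nabla f(\bar{\x}) + \partial g(\bar{\x})$, which is exactly the optimality condition for $\min_\x f(\x) + g(\x)$, thereby identifying $\bar{\x}$ with the unique minimizer.

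The main obstacle is obtaining the precise contraction constant $\zeta$ in the stated $\max$ form, rather than a looser bound such as $(1 - \rho \lmin)/(1 + \rho \lmin)$ coming from strong convexity alone. The sharp rate reflects a scalar optimization over the combined effect of $\lmin$ and $\lmax$ through the Cayley-type transform $\lambda \mapsto (1 - \rho \lambda)/(1 + \rho \lambda)$, and its worst case is attained at the endpoints of the spectrum. Rather than re-deriving this from scratch, I would invoke the tight bound from \cite{giselsson_linear_2017}, which is precisely tailored to the reflected resolvent of a strongly convex and smooth function.
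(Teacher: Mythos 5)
Your proposal is correct and follows essentially the same route as the paper, which does not re-derive this lemma but simply cites \cite[Proposition~28.8]{bauschke_convex_2017} for the splitting and \cite[Theorem~2]{giselsson_linear_2017} for the sharp contraction factor --- precisely the reference you fall back on for the only non-routine step, with the surrounding bookkeeping (nonexpansiveness of $\refl_{\rho g}$, composition of rates, fixed-point-to-optimality translation) handled in the standard way. One small sign slip: the optimality condition for $\bar{\x} = \prox_{\rho f}(2\bar{\x} - \bar{\z})$ reads $(\bar{\x} - \bar{\z})/\rho = \nabla f(\bar{\x})$, not $-\nabla f(\bar{\x})$; with that correction the two relations indeed sum to $0 \in \nabla f(\bar{\x}) + \partial g(\bar{\x})$ as you conclude.
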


\smallskip

Applying the Banach-Picard iteration~\cref{eq:banach-picard} to the PRS operator defined in \cref{lem:prs} yields the updates \cite[section~3]{giselsson_linear_2017}
\begin{subequations}\label{eq:prs}
\begin{align}
    \y_{k+1} &= \prox_{\rho g}(\z_k) \\
    \x_{k+1} &= \prox_{\rho f}(2 \y_{k+1} - \z_k) \\
    \z_{k+1} &= \z_k + 2 (\x_{k+1} - \y_{k+1})
\end{align}
\end{subequations}
which will serve as the template for the algorithm designed in the \cref{sec:algorithm}.

\subsection{Stochastic operator theory}\label{subsec:stochastic-operators}
In many learning applications the methods described in the previous \cref{subsec:optimization-operator} cannot be directly applied due to practical constraints. For example, the gradient descent of \cref{lem:gradient-descent} may only have access to stochastic gradients \cite{gorbunov_unified_2020}. And, in decentralized set-ups, asynchronous computations may result in only some of the components of $\x$ being updated at any given time.
In these scenarios, optimization algorithms are modeled by \textit{stochastic operators}, which we briefly review in this section, see \cite{bastianello_stochastic_2022} and references therein.

Consider the operator $\T : \R^{n N} \to \R^{n N}$ that maps $\x = [x_1^\top, \ldots, x_N^\top]^\top$, $x_i \in \R^n$, into $\T \x = [(\T_1 \x)\top, \ldots, (\T_N \x)^\top]^\top$.
We define the \textit{stochastic Banach-Picard} for $\T$ as the update
\begin{equation}\label{eq:stochastic-bp}
    x_{i,k+1} = \begin{cases}
        \T_i \x_k + e_{i,k} & \text{if} \ u_{i,k} = 1 \\
        x_{i,k} & \text{otherwise}
    \end{cases}
\end{equation}
where $u_{i,k} \sim Ber(p_i)$, $p_i \in (0, 1]$, models randomized coordinate updates, and the random vector $e_{i,k}$ models additive errors, for example stemming from the use of stochastic gradients. We will denote $\e_k = [e_{1,k}^\top, \ldots, e_{N,k}^\top]^\top$.
The following result can be derived as a stochastic version of \cref{lem:banach-picard}, see \cite[Proposition~1]{bastianello_stochastic_2022}.

\begin{lemma}[Stochastic Banach-Picard]\label{lem:stochastic-banach-picard}
Assume that $\T$ in~\cref{eq:stochastic-bp} is $\zeta$-contractive with fixed point $\bar{\x}$. Assume that $\{ u_{i,k} \}_{k \in \N}$ and $\{ e_{i,k} \}_{k \in \N}$ are i.i.d. and independent of each other, and $\ev{\norm{\e_k}} = \nu < \infty$.
Then the random sequence $\{ \x_k \}_{k \in \N}$ generated by~\cref{eq:stochastic-bp} with initial condition $\x_0$ satisfies the following bound
$$
    \ev{\norm{\x_k - \bar{\x}}} \leq \sqrt{\frac{\pmax}{\pmin}} \left( \bar{\zeta}^k \norm{\x_0 - \bar{\x}} + \frac{1 - \bar{\zeta}^k}{1 - \bar{\zeta}} \nu \right)
$$
where $\pmin = \min_i p_i$, $\pmax = \max_i p_i$, and $\bar{\zeta} = \sqrt{1 - \pmin + \pmin \zeta^2}$.
\end{lemma}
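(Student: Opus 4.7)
The plan is to work in the weighted norm $\|\x\|_{P^{-1}}^2 = \sum_i p_i^{-1} \|x_i\|^2$, which is natural given the Bernoulli coordinate updates and satisfies the sandwich $\pmin \|\cdot\|_{P^{-1}}^2 \leq \|\cdot\|^2 \leq \pmax \|\cdot\|_{P^{-1}}^2$. Inside this norm I would split the deterministic and stochastic contributions by Minkowski: writing each $\|x_{i,k+1} - \bar{x}_i\|$ as either $\|x_{i,k} - \bar{x}_i\|$ (when $u_{i,k}=0$) or bounded by $\|\T_i \x_k - \bar{x}_i\| + \|e_{i,k}\|$ (when $u_{i,k}=1$), and applying the weighted $\ell_2$ Minkowski inequality, yields the pathwise bound $\|\x_{k+1} - \bar{\x}\|_{P^{-1}} \leq \sqrt{B} + \sqrt{C}$, where $B$ is the noise-free weighted squared error and $C = \sum_i (u_{i,k}/p_i)\|e_{i,k}\|^2$.

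The crux of the argument is the noiseless step. Conditioning on $\x_k$ and integrating out the $u_{i,k}$, the factors $\ev{u_{i,k}}/p_i$ cancel on updated coordinates, giving
\begin{equation*}
\ev{B \mid \x_k} = \|\T \x_k - \bar{\x}\|^2 + \|\x_k - \bar{\x}\|_{P^{-1}}^2 - \|\x_k - \bar{\x}\|^2.
\end{equation*}
Combining the $\zeta$-contraction of $\T$ in the standard norm with the lower sandwich $\|\x_k - \bar{\x}\|^2 \geq \pmin \|\x_k - \bar{\x}\|_{P^{-1}}^2$ (applied with care for the sign of the coefficient $\zeta^2 - 1 \leq 0$) collapses the right-hand side to $\bar{\zeta}^2 \|\x_k - \bar{\x}\|_{P^{-1}}^2$, which is exactly where the announced expression for $\bar{\zeta}$ emerges. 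Jensen's inequality then upgrades this to $\ev{\sqrt{B}\mid \x_k} \leq \bar{\zeta}\, \|\x_k - \bar{\x}\|_{P^{-1}}$.

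For the noise term, using $u_{i,k} \in \{0,1\}$ and $1/p_i \leq 1/\pmin$ yields the pathwise bound $\sqrt{C} \leq \|\e_k\|/\sqrt{\pmin}$, so that $\ev{\sqrt{C}} \leq \nu/\sqrt{\pmin}$ follows immediately from the hypothesis on $\nu$, with no second moment needed. Assembling the two estimates produces the scalar recursion
\begin{equation*}
\ev{\|\x_{k+1} - \bar{\x}\|_{P^{-1}}} \leq \bar{\zeta}\, \ev{\|\x_k - \bar{\x}\|_{P^{-1}}} + \nu/\sqrt{\pmin},
\end{equation*}
which unrolls into a geometric series. Converting back to the unweighted norm via $\|\x_k - \bar{\x}\| \leq \sqrt{\pmax}\|\x_k - \bar{\x}\|_{P^{-1}}$ and $\|\x_0 - \bar{\x}\|_{P^{-1}} \leq \|\x_0 - \bar{\x}\|/\sqrt{\pmin}$ produces the announced $\sqrt{\pmax/\pmin}$ prefactor on both the initial condition and the noise term.

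The main obstacle, in my view, is choosing the right decomposition at the outset: a naive attempt to bound $\ev{\|\x_{k+1} - \bar{\x}\|^2}$ drags the second moment $\ev{\|\e_k\|^2}$ into the analysis, which is not available under the stated hypothesis, and taking a square root at the end via Jensen would also not deliver the clean geometric-sum expression $(1-\bar{\zeta}^k)/(1-\bar{\zeta})$ appearing in the statement. Splitting the weighted norm by Minkowski keeps the noise term at the $L^1$ level where $\ev{\|\e_k\|} = \nu$ applies directly, and deferring all $\pmin/\pmax$ constants to the final sandwich step is what makes the prefactor come out as precisely $\sqrt{\pmax/\pmin}$.
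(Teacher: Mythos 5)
Your proof is correct. Note that the paper itself does not prove this lemma --- it imports it verbatim from \cite[Proposition~1]{bastianello_stochastic_2022} --- so there is no in-paper argument to compare against; your derivation is a sound, self-contained reconstruction along the standard lines used for randomized coordinate-update operators (the $P^{-1}$-weighted norm, conditioning on $\x_k$ to cancel the $u_{i,k}/p_i$ factors, and the sandwich inequalities to recover the $\sqrt{\pmax/\pmin}$ prefactor). The one step worth making explicit in a written version is that $\{u_{i,k}\}_i$ and $\e_k$ are independent of $\x_k$ (which is measurable with respect to the randomness up to time $k-1$), since that is what licenses both the computation of $\ev{B\mid\x_k}$ and the unconditional bound $\ev{\lVert\e_k\rVert}=\nu$; your handling of the noise at the $L^1$ level via Minkowski, avoiding any second-moment assumption, is exactly the right choice given the hypothesis.
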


\subsection{Differential privacy}


In this section we review some concepts in differential privacy.
Let $\mathcal{D}=\{\xi^{1},\dots, \xi^{q}  \}$ denote a dataset of size $q$ with records drawn from a universe $\mathbb{X}$. Two datasets $\mathcal{D}$ and $\mathcal{D}'$ are referred to as \textit{neighboring} if they are of the same size and differ in at most one data point.

\begin{definition}[R\'enyi differential privacy]
Given $\lambda >1$, a randomized mechanism $\mathcal{M}: \mathbb{X}^q \rightarrow \mathbb{Y}$ is said to have $(\lambda,\varepsilon)$-R\'enyi differential privacy (RDP) \cite{mironov2017renyi}, if for every pair of neighboring datasets $\mathcal{D}$, $\mathcal{D}' \in \mathbb{X}^q$, we have	\begin{equation*}
    {\rm D}_{\lambda}(\mathcal{M}(\mathcal{D})||\mathcal{M}(\mathcal{D}')) \leq \varepsilon,
\end{equation*}
where ${\rm D}_{\lambda}(\mathcal{M}(\mathcal{D})||\mathcal{M}(\mathcal{D}'))$ is the $\lambda$-R\'enyi divergence between $\mathcal{M}(\mathcal{D})$ and $\mathcal{M}(\mathcal{D}')$, i.e.,
\begin{equation*}
    \begin{split}
        &{\rm D}_{\lambda}(\mathcal{M}(\mathcal{D})||\mathcal{M}(\mathcal{D}')) \\
        &= \frac{1}{\lambda -1} \log \int_{\mathbb{Y}} \mathbb{P}[\mathcal{M}(\mathcal{D})=z]^{\lambda}\mathbb{P}[\mathcal{M}(\mathcal{D}')=z]^{1-\lambda} dz.
    \end{split}
\end{equation*}
\end{definition}

Another common notion in differential privacy is approximate differential privacy (ADP) \cite{dwork2014algorithmic}. In the following we introduce its definition and how RDP can be translated into ADP \cite{mironov2017renyi}.

\begin{definition}[Approximate differential privacy]
Given $\varepsilon,\delta\geq 0$, a randomized mechanism $\mathcal{M}: \mathbb{X}^q \rightarrow \mathbb{Y}$ is said to be $(\varepsilon,\delta)$-ADP, if for every pair of neighboring datasets $\mathcal{D}$, $\mathcal{D}'\in\mathbb{X}^q$ and every subset $\mathcal{O}\subseteq \mathbb{Y}$, we have
\begin{equation*}
\begin{split}
    \mathbb{P}[\mathcal{M}(\mathcal{D})\in\mathcal{O}]\leq  e^{\varepsilon} \mathbb{P}[\mathcal{M}(\mathcal{D'})\in\mathcal{O}]+\delta.
\end{split}
\end{equation*}
\end{definition}

\begin{lemma}[RDP to ADP conversion]
\label{lem:DP_Conv}
	If a randomized algorithm is ($\lambda,\varepsilon$)-RDP, then it is $(\varepsilon+\frac{\log(1/\delta)}{\lambda-1}, \delta)$-ADP, $\forall \delta \in(0,1)$.
\end{lemma}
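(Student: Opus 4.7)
The plan is to reduce the ADP condition to a tail bound on the \emph{privacy loss random variable} and then invoke Markov's inequality to exploit the RDP hypothesis. Fix neighboring datasets $\mathcal{D}, \mathcal{D}' \in \mathbb{X}^q$, let $P$ and $Q$ denote the densities (with respect to a common dominating measure) of $\mathcal{M}(\mathcal{D})$ and $\mathcal{M}(\mathcal{D}')$, and define the privacy loss $L(z) = \log(P(z)/Q(z))$. The RDP hypothesis translates into the moment bound
$$
    \mathbb{E}_{z \sim P}\bigl[ e^{(\lambda-1) L(z)} \bigr] = \int P(z)^{\lambda} Q(z)^{1-\lambda}\, dz \le e^{(\lambda-1)\varepsilon},
$$
which is exactly the quantity controlled by the R\'enyi divergence of order $\lambda$.

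Next, for any measurable $\mathcal{O} \subseteq \mathbb{Y}$ and any threshold $t > 0$, I would split the output space along $\{L \le t\}$ versus $\{L > t\}$:
$$
    \mathbb{P}[\mathcal{M}(\mathcal{D}) \in \mathcal{O}] = \mathbb{P}[\mathcal{M}(\mathcal{D}) \in \mathcal{O}, L \le t] + \mathbb{P}[\mathcal{M}(\mathcal{D}) \in \mathcal{O}, L > t].
$$
On the good event $\{L \le t\}$ we have $P(z) \le e^{t} Q(z)$ pointwise, so the first term is bounded by $e^{t}\, \mathbb{P}[\mathcal{M}(\mathcal{D}') \in \mathcal{O}]$, which gives the multiplicative part of ADP with parameter $\varepsilon' = t$.

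For the second term, I would apply Markov's inequality to the non-negative random variable $e^{(\lambda-1) L(z)}$ under the law $P$:
$$
    \mathbb{P}_{z \sim P}[L(z) > t] \le e^{-(\lambda-1) t}\, \mathbb{E}_{z \sim P}\bigl[ e^{(\lambda-1) L(z)} \bigr] \le e^{(\lambda-1)(\varepsilon - t)},
$$
where the last step uses the RDP bound established in the first paragraph. Choosing $t = \varepsilon + \tfrac{\log(1/\delta)}{\lambda-1}$ makes the right-hand side equal to $\delta$, so this tail probability contributes at most the additive slack $\delta$. Combining both pieces yields
$$
    \mathbb{P}[\mathcal{M}(\mathcal{D}) \in \mathcal{O}] \le e^{\varepsilon + \frac{\log(1/\delta)}{\lambda-1}}\, \mathbb{P}[\mathcal{M}(\mathcal{D}') \in \mathcal{O}] + \delta,
$$
which is precisely the claimed $(\varepsilon + \tfrac{\log(1/\delta)}{\lambda-1}, \delta)$-ADP guarantee.

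The argument is essentially bookkeeping once the privacy loss viewpoint is adopted; the only subtle step is the correct choice of the threshold $t$, which must balance the multiplicative gain $e^{t}$ on the good event against the tail mass $e^{(\lambda-1)(\varepsilon - t)}$ on the bad event. I expect this optimization, together with verifying that the decomposition goes through for arbitrary measurable $\mathcal{O}$ (including the case where $Q(z)=0$, handled by the convention $L=+\infty$ on that null set for $P$), to be the main, though routine, technical point.
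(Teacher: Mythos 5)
Your proof is correct. Note that the paper itself offers no proof of this lemma: it is imported directly from \cite{mironov2017renyi}, so there is no in-paper argument to compare against. Your route — introducing the privacy loss random variable $L=\log(P/Q)$, observing that the order-$\lambda$ R\'enyi bound is exactly a bound on the moment $\mathbb{E}_{P}[e^{(\lambda-1)L}]$, splitting $\mathcal{O}$ along $\{L\le t\}$, and killing the tail with Markov's inequality at the threshold $t=\varepsilon+\log(1/\delta)/(\lambda-1)$ — is the standard ``moments accountant''-style derivation and is watertight. It differs slightly from Mironov's original argument, which instead uses H\"older's inequality to obtain the probability-preservation bound $P(A)\le\left(e^{\varepsilon}Q(A)\right)^{(\lambda-1)/\lambda}$ and then case-splits on the magnitude of $Q(A)$; both yield the identical conversion constant, and yours is arguably the more transparent of the two. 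Your remark on the set $\{Q=0\}$ is also handled correctly: since $1-\lambda<0$, finiteness of the order-$\lambda$ divergence forces $P(\{Q=0\})=0$, so that set is absorbed into the tail event without affecting the bound.
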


\section{Problem Formulation}\label{sec:problem-formulation}
In the following we formally describe the problem at hand, and discuss the application-driven design objectives that the proposed algorithm should satisfy.

\subsection{Problem}

\begin{figure}[!ht]
    \centering
    \includegraphics{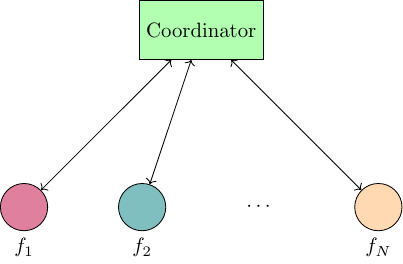}
    \caption{The federated architecture.}
    \label{fig:federated-architecture}
\end{figure}

Consider the federated learning set-up stylized in \cref{fig:federated-architecture}, in which $N$ agents and a coordinator cooperate towards the solution of the \textit{composite, empirical risk} minimization problem
\begin{equation}\label{eq:problem}
    \bar{x} = \argmin_{x \in \R^{n}} \ \sum_{i = 1}^N f_i(x) + h(x)
\end{equation}
where
$
    f_i(x) := \frac{1}{q_i}\sum_{j=1}^{q_i} \ell_i\left( x;\xi_i^{(j)} \right)
$
is the local cost function of agent $i$ characterized by a local dataset $\mathcal{D}_i = \{ \xi_{i}^{(1)}, \dots, \xi_{i}^{(q_i)} \}$, $\ell_i(x,\xi)$ is the loss of the machine learning model over the data instance $\xi$, and $h : \R^n \to \R \cup \{ +\infty \}$ is a common regularization function, which may be non-smooth. Non-smooth regularization terms are particularly important in machine learning, as they can be used to encode prior knowledge in the structure of trained models \cite{iutzeler_nonsmoothness_2020}. For example, the regularizer $h(x) = \norm{x}_1$ is often used to promote sparsity (low complexity) of the trained models. We also remark that our algorithm directly applies to smooth problems when setting $h(x) = 0$.
We introduce the following assumption.

\begin{assumption}\label{as:costs}
The local loss functions $\ell_i$, $i \in \{ 1, \ldots, N \}$, are $\lmin$-strongly convex and $\lmax$-smooth with respect to the first argument. The cost $h$ is convex, closed, and proper\footnote{A function $h : \R^n \to \R$ is \textit{closed} if for any $t \in \R$ the set $\{ \x \in \R^n \ | \ f(\x) \leq t \}$ is closed; it is \textit{proper} if $h(x) > -\infty$ for any $x \in \R^n$.}.
\end{assumption}

We remark that \cref{as:costs} implies that each $f_i$ is both $\lmin$-strongly convex and $\lmax$-smooth. A standard assumption in composite optimization is for $h$ to be proximable, that is, to have a closed-form proximal \cite{parikh_proximal_2014,iutzeler_nonsmoothness_2020}. Many costs of interest,  (such as the sparsity promoting $h(x) = \norm{x}_1$) are indeed proximable. However, this is not required in our set-up; indeed, if $h$ is not proximable we can approximate $\prox_{\rho h}$ to some predefined accuracy, to the cost of injecting an additive error, which is nonetheless allowed by \cref{pr:convergence}.

In order to solve the problem in a federated set-up, the first step is to reformulate it into the following
\begin{equation}\label{eq:distributed-problem}
\begin{split}
    \bar{\x} =& \argmin_{\x \in \R^{n N}} \ \sum_{i = 1}^N f_i(x_i) + h(x_1) \quad \text{s.t.} \ x_1 = \ldots = x_N
\end{split}
\end{equation}
where $x_i \in \R^n$ are local copies of $x$, and $\x = [x_1^\top, \ldots, x_N^\top]$.
The consensus constraint $x_1 = \ldots = x_N$ guarantees the equivalence of~\cref{eq:distributed-problem} with~\cref{eq:problem}, and implies that $\bar{\x} = \1_N \otimes \bar{x}$, where $\bar{x}$ is the unique optimal solution of~\cref{eq:problem}.
We can now rewrite~\cref{eq:distributed-problem} in the following form
\begin{equation}\label{eq:distributed-problem-2}
    \bar{\x} = \argmin_{\x \in \R^{n N}} f(\x) + g(\x)
\end{equation}
with $f(\x) = \sum_{i = 1}^N f_i(x_i)$, and
$
    g(\x) = \iota_\C(\x) + h(x_1),
$
where we imposed the consensus constraints through the indicator function of the \textit{consensus subspace}
$$
    \C = \{ \x = [x_1^\top, \ldots, x_N^\top] \in \R^{n N} \ \vert \ x_1 = \ldots = x_N \}.
$$

\smallskip

In principle,~\cref{eq:distributed-problem-2} could be now solved applying the Peaceman-Rachford splitting of~\cref{eq:prs}, but due to practical constraints (discussed in \cref{subsec:design-objectives}) this may not be possible. Instead, in \cref{sec:algorithm} we will show how to use PRS \textit{as a template} to design a federated algorithm that can be applied in practice.

\subsection{Design objectives}\label{subsec:design-objectives}
In this section we discuss some of the challenges that arise in federated learning set-ups, due to the practical constraints that they pose \cite{li_federated_2020,gafni_federated_2022}. The objective in the remainder of the paper then is to design and analyze an algorithm that can provably overcome these challenges.

\paragraph{Communication burden}
In machine learning we are often faced with the task of training high-dimensional models, with deep neural networks the foremost example. This implies that problem~\cref{eq:problem} may be characterized by $n \gg 1$.
In a federated set-up then, sharing models between the agents and the coordinator imposes a significant communication burden \cite{zhao_towards_2023}.
Initial approaches to federated learning\footnote{Corresponding to the $1^\text{st}$--$3^\text{rd}$ generations in the categorization of \cite{grudzien_can_2023}.} therefore introduced different techniques to reduce the number of communications required, such as \textit{local training} \cite{gafni_federated_2022}.
Local training requires that, before a round of communications, each agent perform multiple steps (or epochs) of minimization of the local loss function.
Employing local training, however, may result in worse accuracy, especially when the loss functions of the agents are heterogeneous. The objective then is to design federated algorithms that perform local training without jeopardizing accuracy of the trained model, see \textit{e.g.} \cite{zhang_fedpd_2021,mitra_linear_2021,condat_tamuna_2023,grudzien_can_2023,alghunaim_local_2023}.

\paragraph{Heterogeneous agents}
The devices involved in the federated learning process may be in general very heterogeneous as regards to computational, storage, and communication capabilities. Indeed, these devices may be equipped with very different hardware (CPU, memory, battery), and they may be using different network connectivity (mobile or Wi-Fi) \cite{li_federated_2020,gafni_federated_2022}.
The heterogeneity of the agents' resources implies that they may perform local computations, and transmit their results, at different rates. This results in \textit{partial participation}, with only a subset of the agents being active at any given time.
Besides arising from practical limitations, partial participation may also be enforced in order to relieve the communication burden.
The goal therefore is to design a federated algorithm that allows for partial participation while preserving accuracy of the trained model.

\paragraph{Privacy}
The leakage of private information through machine learning models' parameters and predictions has been extensively documented \cite{nasr2019comprehensive,bassily2014private,shokri2017membership}. One notable example is the support vector machine (SVM) in its dual form, where the solution involves a linear combination of specific data points. Recently, there has been a growing interest in iterative randomized training algorithms that offer privacy-preserving guarantees \cite{chourasia2021differential,chaudhuri2011differentially}, such as RDP and ADP. However, the existing results suffer from privacy loss that increases over training epochs \cite{liu2023privacy} and fail to support local training and the partial participation of heterogeneous agents \cite{chourasia2021differential}. These limitations make it challenging to extend those approaches to the specific setup addressed in this work. The objective of this study is to address these challenges by providing a rigorous privacy-preserving guarantee that remains bounded over time and is compatible with the federated learning setup under consideration.

\smallskip

To summarize, the goal of this paper is to design a federated algorithm with the following features:
\begin{enumerate}
    \item[(i)] \textit{Local training}: the algorithm should allow for multiple epochs of local training, but without jeopardizing accuracy of the computed solution.

    \item[(ii)] \textit{Partial participation}: the algorithm should allow for a subset of the agents to be active at any time, again without jeopardizing accuracy.

    \item[(iii)] \textit{Privacy}: the algorithm should incorporate a privacy mechanism to safeguard the local training data.
\end{enumerate}

\section{Algorithm Design}\label{sec:algorithm}
In this section we derive and discuss the proposed \cref{alg:main-algorithm}, highlighting the different design choices that it allows for.

\begin{algorithm}[!ht]
\caption{Federated Private Local Training (\alg)}
\label{alg:main-algorithm}
\begin{algorithmic}[1]
	\Require For each agent initialize $x_{i,0}$ and $z_{i,0}$; choose the local solver, its parameters, and the number of local epochs $\Ne$; the parameter $\rho > 0$ 
   \Ensure $x_{i,K}$
	\For{$k = 0, 1, \ldots,K$ each agent $i$}
        \CommentState{coordinator}
        \State the coordinator averages the information received from the agents
        $$
            y_{k+1} = \prox_{\rho h / N}\left(\frac{1}{N} \sum_{i = 1}^N z_{i,k} \right)
        $$
        and transmits it to all agents

    	\CommentState{agents}
        \For{$i = 1, \ldots, N$}
            \If{agent $i$ is active}
                \State set $w_{i,k}^0 = x_{i,k}$, $v_{i,k} = 2 y_{k+1} - z_{i,k}$
                \For{$\ell = 0, 1, \ldots, \Ne-1$} \Comment{local solver}
                    \State \textit{e.g.} gradient descent
                    $$
                        \hspace{-0.6cm}w_{i,k}^{\ell+1} = w_{i,k}^\ell - \gamma \left( \nabla f_i(w_{i,k}^\ell) + \frac{1}{\rho} (w_{i,k}^\ell - v_{i,k})  \right)
                    $$
                \EndFor
                \State set $x_{i,k+1} = w_{i,k}^{\Ne}$
                \State update the auxiliary variable
                $$
                    z_{i,k+1} = z_{i,k} + 2 (x_{i,k+1} - y_{k+1})
                $$
                and transmit to coordinator
            \Else
                \Comment{inactive agent}
                \State set $x_{i,k+1} = x_{i,k}$ and $z_{i,k+1} = z_{i,k}$
            \EndIf
        \EndFor

        \State 
        
    \EndFor
\end{algorithmic}
\end{algorithm}

\subsection{Algorithm derivation}
We start by applying the PRS of~\cref{eq:prs} to the federated optimization problem~\cref{eq:distributed-problem-2}, which yields
\begin{subequations}\label{eq:prs-fed}
\begin{align}
    \y_{k+1} &= \prox_{\rho g}(\z_k) \label{eq:prs-fed-y} \\
    \x_{k+1} &= \prox_{\rho f}(2 \y_{k+1} - \z_k) \label{eq:prs-fed-x} \\
    \z_{k+1} &= \z_k + 2 (\x_{k+1} - \y_{k+1}), \label{eq:prs-fed-z}
\end{align}
\end{subequations}
We now discuss the implementation of \cref{eq:prs-fed-y} and \cref{eq:prs-fed-x} in turn.

\subsubsection{Implementation of~\cref{eq:prs-fed-y}}
First of all, update~\cref{eq:prs-fed-y} can be computed by the coordinator using the information it receives from all the agents, as detailed in \cref{lem:prox-g}.

\begin{lemma}[$\prox_{\rho g}$ computation]\label{lem:prox-g}
The proximal of $g(\x) = \iota_\C(\x) + h(x_1)$ at a point $\z \in \R^{n N}$ is given by
$
    \prox_{\rho g}(\z) = \1_N \otimes \prox_{\rho h / N}\left(\frac{1}{N} \sum_{i = 1}^N z_i \right).
$
\end{lemma}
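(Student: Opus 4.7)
The plan is to unpack the definition of the proximal operator and exploit the separable-plus-consensus structure of $g$. Starting from
\begin{equation*}
\prox_{\rho g}(\z) = \argmin_{\x \in \R^{nN}} \left\{ \iota_\C(\x) + h(x_1) + \frac{1}{2\rho}\|\x - \z\|^2 \right\},
\end{equation*}
the indicator $\iota_\C$ restricts the minimization to the consensus subspace, so I would parametrize $\x = \1_N \otimes x$ for some $x \in \R^n$ and rewrite the unconstrained subproblem as
\begin{equation*}
\min_{x \in \R^n} \left\{ h(x) + \frac{1}{2\rho} \sum_{i=1}^N \|x - z_i\|^2 \right\}.
\end{equation*}

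Next I would complete the square in the quadratic term. Using the elementary identity $\sum_{i=1}^N \|x - z_i\|^2 = N \bigl\|x - \tfrac{1}{N}\sum_{i=1}^N z_i\bigr\|^2 + C(\z)$, where $C(\z)$ does not depend on $x$, the problem reduces to
\begin{equation*}
\min_{x \in \R^n} \left\{ h(x) + \frac{N}{2\rho} \Bigl\|x - \frac{1}{N}\sum_{i=1}^N z_i\Bigr\|^2 \right\},
\end{equation*}
which, by inspection of \cref{def:proximal} with penalty $\rho/N$, is exactly $\prox_{\rho h / N}\bigl(\tfrac{1}{N}\sum_{i=1}^N z_i\bigr)$. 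Because $h \in \ccp{\R^n}$ by \cref{as:costs}, this proximal is well-defined and unique.

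Finally, lifting the minimizer back to $\R^{nN}$ via the consensus parametrization gives $\prox_{\rho g}(\z) = \1_N \otimes \prox_{\rho h / N}\bigl(\tfrac{1}{N}\sum_{i=1}^N z_i\bigr)$. I do not anticipate a real obstacle here: the argument is essentially a two-line reduction, and the only care needed is to justify that restricting to $\C$ is legitimate (which follows immediately from $\iota_\C$ being $+\infty$ off the subspace, so any minimizer must lie in $\C$). No convexity beyond what \cref{as:costs} provides is required, and no continuity or proximability assumption on $h$ is needed for the identity to hold formally.
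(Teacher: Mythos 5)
Your proposal is correct and follows essentially the same route as the paper's own proof: restrict the minimization to the consensus subspace via $\x = \1_N \otimes x$, complete the square so that $\sum_{i=1}^N \norm{x - z_i}^2$ becomes $N \bigl\lVert x - \tfrac{1}{N}\sum_{i=1}^N z_i \bigr\rVert^2$ plus a constant, and recognize the result as $\prox_{\rho h / N}$ evaluated at the average. No gaps; the justification that any minimizer must lie in $\C$ is exactly the step the paper also relies on.
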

\begin{proof}
See Appendix~\ref{proof:sec:algorithm}.
\end{proof}

\smallskip

Given the result of \cref{lem:prox-g}, the coordinator is therefore tasked with storing and updating the variable
$
    y_{k+1} = \prox_{\rho h / N}\left(\frac{1}{N} \sum_{i = 1}^N z_{i,k} \right)
$
for which $\y_{k+1} = \1_N \otimes y_{k+1}$.

\smallskip

\subsubsection{Implementation of~\cref{eq:prs-fed-x}}
Computing~\cref{eq:prs-fed-x} instead is a task for the agents, as the update depends on the private local costs. In particular, letting $\vv_k = 2 \y_{k+1} - \z_k$, we notice that the update~\cref{eq:prs-fed-x} requires the solution of
\begin{align}
    &\min_\x \sum_{i = 1}^N f_i(x_i) + \frac{1}{2 \rho} \norm{\x - \vv_k}^2 = \label{eq:online-problem} \\
    & \hspace{1cm} = \sum_{i = 1}^N \min_{x_i} f_i(x_i) + \frac{1}{2 \rho} \norm{x_i - v_{i,k}}^2 =: d_{i,k}(x_i), \nonumber
\end{align}
where $v_{i,k} = 2 y_{k+1} - z_{i,k}$.
Since the problem can be decoupled, each agent needs to locally solve the corresponding minimization $\min_{x_i} d_{i,k}(x_i)$, which yields the local state update\footnote{This can also be seen by directly applying \cite[section~2.1]{parikh_proximal_2014}.}
\begin{equation}\label{eq:local-update}
    x_{i,k+1} = \prox_{\rho f_i}(v_{i,k}) = \argmin_{x_i \in \R^n} d_{i,k}(x_i).
\end{equation}
However, in general~\cref{eq:local-update} does not have a closed form solution, and the agents can only \textit{approximate the update via a finite number of local training epochs}.
\footnote{Even if a closed form solution is in principle available, we may not be able to actually compute it due to the high dimensionality of the problem. For example, if $f_i(x) = \frac{1}{2} x^\top A_i x + \langle b_i, x \rangle$ then $\prox_{\rho f_i}(v_{i,k}) = \left( I_n + \rho A_i \right)^{-1} (v_{i,k} - b_i)$, which requires inversion of an $n \times n$ matrix. Therefore if $n \gg 1$, the matrix inversion may be computationally expensive and thus the closed form update may be unavailable in practice.}
As a first approach, each agent can apply $\Ne \in \N$ steps of gradient descent (cf. \cref{lem:gradient-descent}) to approximate the local update:
\begin{equation}\label{eq:local-update-approx}
\begin{split}
    w_{i,k}^0 &= x_{i,k} \\
    w_{i,k}^{\ell+1} &= w_{i,k}^\ell - \gamma \nabla d_{i,k}(w_{i,k}^\ell) \quad \ell = 0, \ldots, \Ne-1 \\
    x_{i,k+1} &= w_{i,k}^\Ne
\end{split}
\end{equation}
where $\gamma \in (0, 2 / (\lmin + 1 / \rho))$, since $d_{i,k} \in \scs{\lmin + 1/\rho}{\lmax + 1/\rho}{\R^n}$. More generally, one can \textit{apply any suitable local solver} to compute~\cref{eq:local-update}, as discussed in \cref{subsec:local-training}. In particular, the choice of local solver can be leveraged to \textit{enhance privacy}.

Finally, in the derivation above, each agent approximates its local proximal at each iteration $k \in \N$. However, this is not required, as \cref{alg:main-algorithm} allows for only a subset of \textit{active} agents to share the results of their local training at iteration $k \in \N$; this feature is discussed in \cref{subsec:partial-participation}.

\vspace{0.2cm}

To summarize, the proposed \cref{alg:main-algorithm} provides two flexible design choices in the following key aspects:
\begin{enumerate}
    \item \textit{local training}: during local training, any suitable solver can be employed by the agents, including privacy preserving methods (objectives (i) and (iii));
    \item \textit{partial participation}: the algorithm allows for a \textit{subset} of the agents to be active at any given time $k \in \N$ (objective (ii)).
\end{enumerate}

\subsection{Local training methods}\label{subsec:local-training}
In principle, one may choose to use any suitable local solver to approximate the solution of $\min_{x_i} d_{i,k}(x_i) = f_i(x_i) + (1 / 2\rho) \norm{x_i - v_{i,k}}^2$. This problem is especially amenable since it is strongly convex, with $d_{i,k} \in \scs{\lmin + 1/\rho}{\lmax + 1/\rho}{\R^n}$.
A first option is the gradient descent, already discussed in~\cref{eq:local-update-approx}. Notice that by \cref{lem:gradient-descent} we know that the gradient descent is contractive with rate $\chi = \max\{ |1 - \gamma (\lmin+1/\rho)|, |1 - \gamma (\lmax+1/\rho)| \}$, which is minimized by choosing $\gamma = 2 / (\lmax + \lmin + 2/\rho)$. In the following, we discuss some alternatives that address different design objectives.

\subsubsection*{Accelerated gradient descent}
Improving the performance of local training can be achieved by using the \textit{accelerated gradient descent}, see \textit{e.g.} \cite[Algorithm~14]{daspremont_acceleration_2021}, characterized by
\begin{equation}\label{eq:local-update-accelerated}
\begin{split}
    w_{i,k}^0 &= x_{i,k}, \ u_i^0 = w_{i,k}^0 \\
    u_i^{\ell+1} &= w_{i,k}^\ell - \frac{1}{\lmax + 1/\rho} \nabla d_{i,k}(w_{i,k}^\ell) \quad \ell = 0, \ldots, \Ne-1 \\
    w_{i,k}^{\ell+1} &= u_i^{\ell+1} + \frac{\sqrt{\lmax + 1/\rho} - \sqrt{\lmin + 1/\rho}}{\sqrt{\lmax + 1/\rho} + \sqrt{\lmin + 1/\rho}} (u_i^{\ell+1} - u_i^\ell) \\
    x_{i,k+1} &= w_{i,k}^\Ne,
\end{split}
\end{equation}
which employs constant step-sizes owing to the strong convexity of $d_{i,k}$.
Local solvers with even better performance can also be chosen, for example (quasi-)Newton methods.

\subsubsection*{Stochastic gradient descent}
Both gradient descent and accelerated gradient descent rely on the use of full gradient evaluations, which may be too computationally expensive in learning applications. Therefore, the agents may resort to \textit{stochastic gradient descent} (SGD) methods \cite{gorbunov_unified_2020}, which employ approximated gradients during the local training.
Recalling that the local costs are defined as
$
    f_i(x) := \frac{1}{q_i} \sum_{j=1}^{q_i} \ell_i(x;\xi_i^{(j)}),
$
SGD makes use of the following approximate gradient
$$
    \hat{\nabla} f_i(x) = \frac{1}{|\mathcal{B}_i|} \sum_{j \in \mathcal{B}_i} \nabla \ell_i(x;\xi_i^{(j)})
$$
where $\mathcal{B}_i \subset \{ 1, \ldots, q_i \}$ are the indices of a subset of data points chosen uniformly at random.

\subsubsection*{Private local training}
The previous local solvers address performance concerns; however, when the design objective is privacy preservation, the agents should resort to the use of \textit{noisy gradient descent} \cite{chourasia2021differential,altschuler_privacy_2022}. In particular, the update of $w_{i,k}^{\ell+1}$ in~\cref{eq:local-update-approx} is perturbed with Gaussian noise:
\begin{equation}\label{eq:private-local-update-approx}
\begin{split}
    w_{i,k}^{\ell+1} &= w_{i,k}^\ell - \gamma \nabla d_{i,k}(w_{i,k}^\ell) + t_{i,k}^\ell \\
    t_{i,k}^\ell &\sim \sqrt{2\gamma} \mathcal{N}(0,\tau^2{I}_n)
\end{split}
\end{equation}
where $ \ell = 0, \ldots, \Ne-1 $, and $\tau^2 > 0$ is the noise variance.
The addition of a stochastic additive noise in each iteration of~\cref{eq:private-local-update-approx} allows us to establish rigorous differential privacy guarantees, which can be tuned by selecting the variance $\tau^2$.

\smallskip

\begin{remark}[Uncoordinated local solvers]
In this section we only discussed the case of ``coordinated'' local solvers, that is, all agents use the same solver with the same parameters (\textit{e.g.} gradient with common -- or coordinated -- step-size $\gamma$).
The results of \cref{sec:convergence} actually allow for uncoordinated solvers, with the agents either using different algorithms, or the same algorithm with different parameters. For example, if the local costs are such that $f_i \in \scs{\lmin_i}{\lmax_i}{\R^n}$, then the local step-size can be tuned using the strong convexity and smoothness moduli of $f_i$, instead of the global moduli $\lmin = \min_i \lmin_i$, $\lmax = \max_i \lmax_i$.
This is especially useful because it means that the algorithm can adapt to the \emph{heterogeneity} of the local costs.
\end{remark}

\subsection{Partial participation}\label{subsec:partial-participation}
As discussed in \cref{subsec:design-objectives}, there are different reasons why only a subset of the agents may participate at any given time $k \in \N$.
On the one hand, the heterogeneity of the agents' hardware (CPU, battery, connectivity, ...) may result in different paces of local training. This means that not all agents will conclude their computations at the same time, and the coordinator will receive new results only from those that did.
On the other hand, partial participation can be introduced by design in order to reduce the communication burden. In this case, the coordinator picks a (random) subset of the agents and requests that they send the new results of local training, thus reducing the number of communications exchanged. This scheme is often referred to as \textit{client selection} \cite{grudzien_can_2023}.

The proposed \cref{alg:main-algorithm} allows for any partial participation, be it intrinsic (due to heterogeneity of the agents) or extrinsic (due to client selection). In particular, the only requirement, detailed in \cref{as:stochastic-setup}, is that at any given time an agent $i$ participates in the learning with a given fixed probability $p_i \in (0, 1]$.

\section{Convergence Analysis}\label{sec:convergence}
In this section we prove the convergence of \cref{alg:main-algorithm}. We start by showing that the proposed algorithm can be interpreted as a contractive operator. Then, building on this result we characterize convergence for the different local training choices discussed in \cref{subsec:local-training}.

\subsection{Contractiveness of \cref{alg:main-algorithm}}
The goal of this section is to show that \cref{alg:main-algorithm} can be characterized as a contractive operator, similarly to the PRS from which it was derived.
First of all, we remark that $y_{k+1}$ only depends on $\z_k$, and hence we can represent \cref{alg:main-algorithm} in terms of $\x$ and $\z$ only. In particular, letting $\X : \R^{n N} \times \R^{n N} \to \R^{n N} : (\x_k, \z_k) \mapsto \x_{k+1}$ be the map denoting the local updates, and assuming that \textit{all agents are active}, we can write
\begin{align*}
    \begin{bmatrix}
        \x_{k+1} \\ \z_{k+1}
    \end{bmatrix} =
    \begin{bmatrix}
        \X(\x_k, \z_k) \\
        \z_k + 2 (\x_{k+1} - \prox_{\rho g}(\z_k))
    \end{bmatrix} =
    \T \begin{bmatrix}
        \x_k \\ \z_k
    \end{bmatrix}
\end{align*}
The following \cref{pr:algorithm-contractive} proves that $\T$ is contractive for a suitable choice of local solver.

\begin{proposition}[\cref{alg:main-algorithm} is contractive]\label{pr:algorithm-contractive}
Let the local solvers be $\chi^\Ne$-contractive in the first argument, that is
$$
    \norm{\X(\x, \z_k) - \X(\y, \z_k)} \leq \chi^\Ne \norm{\x - \y}, \quad \forall \x, \y \in \R^n,
$$
and let the PRS applied to~\cref{eq:distributed-problem} be $\zeta$-contractive (cf. \cref{lem:prs}) with fixed point $\bar{\z}$.
Let the parameters $\rho$ and $\Ne$ be chosen such that the following matrix is stable:
$$
    S = \begin{bmatrix}
        \chi^\Ne           & \frac{1 + \chi^\Ne}{\lmin + 1/\rho} \\
        2 \chi^\Ne  & \zeta + \frac{2 \chi^\Ne}{\lmin + 1/\rho}
    \end{bmatrix}.
$$
Then the operator $\T : \R^{n N} \times \R^{n N} \to \R^{n N} \times \R^{n N}$ characterizing \cref{alg:main-algorithm} is contractive. The contraction constant of $\T$ is upper bounded by $\norm{S}$, and its unique fixed point is $[ \bar{\x}^\top, \bar{\z}^\top ]^\top$, where $\bar{\x}$ is the optimal solution to~\cref{eq:distributed-problem}.
\end{proposition}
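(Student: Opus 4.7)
The plan is to verify that $\T$ admits the stated fixed point and to prove contractiveness by bounding each component of $\T(\x,\z)-\T(\bar{\x},\bar{\z})$ in terms of $\|\x-\bar{\x}\|$ and $\|\z-\bar{\z}\|$, then assembling the two bounds into the matrix $S$. The argument proceeds in three steps.

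\emph{Step 1 (fixed point).} Any fixed point $(\bar{\x},\bar{\z})$ of $\T$ must satisfy $\bar{\x}=\X(\bar{\x},\bar{\z})$ and $\bar{\z}=\bar{\z}+2(\bar{\x}-\prox_{\rho g}(\bar{\z}))$. The second equation gives $\bar{\x}=\prox_{\rho g}(\bar{\z})$. The first equation, combined with the $\chi^\Ne$-contractiveness of $\X(\cdot,\bar{\z})$ and the Banach--Picard theorem, forces $\bar{\x}$ to coincide with the unique fixed point of $\X(\cdot,\bar{\z})$, namely $\prox_{\rho f}(2\bar{\x}-\bar{\z})$. These are precisely the characterizing equations of the PRS fixed point in \cref{lem:prs}, so $\bar{\x}$ is the unique minimizer of \cref{eq:distributed-problem-2} and $\bar{\z}$ is the associated PRS fixed point. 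Uniqueness of the fixed point of $\T$ itself will follow a posteriori from contractiveness via \cref{lem:banach-picard}.

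\emph{Step 2 (bound on the $\x$-component).} Let $\bar{\X}(\z):=\prox_{\rho f}(\refl_{\rho g}(\z))$ denote the \emph{exact} PRS $\x$-update, which is also the unique fixed point of $\X(\cdot,\z)$. A double triangle inequality yields
\begin{align*}
\|\X(\x,\z)-\bar{\x}\| &\leq \|\X(\x,\z)-\X(\bar{\x},\z)\| \\
&\quad + \|\X(\bar{\x},\z)-\X(\bar{\x},\bar{\z})\| \\
&\leq \chi^\Ne\|\x-\bar{\x}\|+(1+\chi^\Ne)\|\bar{\X}(\z)-\bar{\X}(\bar{\z})\|,
\end{align*}
where the first summand uses contractiveness of $\X$ in its first argument, and the second uses the identity $\bar{\x}=\bar{\X}(\bar{\z})$ together with the fact that $\X(\cdot,\z)$ contracts towards $\bar{\X}(\z)$ at rate $\chi^\Ne$. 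Because $\prox_{\rho f}$ is Lipschitz with a constant governed by the strong convexity modulus of the local cost $d_{i,k}$ (which is $\lmin+1/\rho$) and $\refl_{\rho g}$ is nonexpansive, one obtains $\|\bar{\X}(\z)-\bar{\X}(\bar{\z})\|\leq (\lmin+1/\rho)^{-1}\|\z-\bar{\z}\|$, producing the first row of $S$.

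\emph{Step 3 (bound on the $\z$-component and conclusion).} The key rewriting is $\T_2(\x,\z)=\T_{\mathrm{PRS}}(\z)+2\bigl(\X(\x,\z)-\bar{\X}(\z)\bigr)$, where $\T_{\mathrm{PRS}}=\refl_{\rho f}\circ\refl_{\rho g}$ is the exact PRS operator of \cref{lem:prs}, which is $\zeta$-contractive with fixed point $\bar{\z}$. Combining this with $\|\X(\x,\z)-\bar{\X}(\z)\|\leq \chi^\Ne\|\x-\bar{\X}(\z)\|\leq \chi^\Ne(\|\x-\bar{\x}\|+\|\bar{\X}(\z)-\bar{\X}(\bar{\z})\|)$ and the Lipschitz bound from Step~2 yields the second row of $S$. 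Stacking the two bounds gives the entrywise inequality $(\|\T_1(\x,\z)-\bar{\x}\|,\|\T_2(\x,\z)-\bar{\z}\|)^\top \leq S\,(\|\x-\bar{\x}\|,\|\z-\bar{\z}\|)^\top$. Since $S$ has nonnegative entries, stability of $S$ (spectral radius below one) implies contractiveness of $\T$ with rate upper bounded by $\|S\|$ in a suitably weighted product norm, and \cref{lem:banach-picard} then delivers uniqueness of the fixed point and linear convergence to $(\bar{\x},\bar{\z})$.

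The main obstacle is that $\X$ depends nontrivially on both arguments and is neither exactly $\prox_{\rho f}$ nor a pure contraction in $\z$. The unifying device is to route every bound through the auxiliary map $\bar{\X}(\z)$, which simultaneously serves as the fixed point of the inexact local iteration at $\z$ \emph{and} as the exact PRS $\x$-update. This dual role is what allows the solver-side contractiveness $\chi^\Ne$ and the PRS-side contractiveness $\zeta$ to be combined into a single linear recursion governed by $S$.
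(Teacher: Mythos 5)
Your proof is correct and follows essentially the same route as the paper's: you introduce the exact PRS $\x$-update $\bar{\X}(\z)=\prox_{\rho f}(\refl_{\rho g}(\z))$ as the fixed point of the local iteration, bound its Lipschitz dependence on $\z$ via strong convexity of $d_{i,k}$, decompose the $\z$-update as the exact PRS operator plus twice the local-training inexactness, and stack the resulting triangle-inequality bounds into the nonnegative matrix $S$. The only cosmetic difference is the order in which the triangle inequalities are applied in the $\x$-bound, which yields the identical first row of $S$.
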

\begin{proof}
See Appendix~\ref{proof:pr:algorithm-contractive}.
\end{proof}

\smallskip

The contractiveness of \cref{alg:main-algorithm} hinges on the stability of matrix $S \in R^{2 \times 2}$; the following result shows that it is indeed always possible to choose the parameters of \alg that stabilize $S$.

\begin{lemma}[Stabilizing $S$]\label{lem:stable-s}
Given \cref{alg:main-algorithm} with $\chi^\Ne$-contractive local solvers, there always exists a choice of parameters such that $S$ in \cref{pr:algorithm-contractive} is stable.
\end{lemma}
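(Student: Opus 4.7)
The plan is to fix $\rho$ as any admissible value (so that $\lmin + 1/\rho > 0$ and the PRS contraction factor $\zeta \in (0,1)$ is well defined), and then show that the matrix $S$ can be stabilized by choosing $\Ne$ sufficiently large. The underlying intuition is that $\chi \in (0,1)$ because the local solver is contractive on the strongly convex subproblem $\min_{x_i} d_{i,k}(x_i)$, so $\chi^\Ne \to 0$ geometrically, and in the limit $S$ degenerates to the upper triangular matrix with diagonal entries $0$ and $\zeta$, whose spectral radius $\zeta$ is strictly less than one.

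The first step is to invoke a discrete-time stability test for the $2 \times 2$ matrix $S$. Using the Schur--Cohn / Jury criterion on the characteristic polynomial $\lambda^2 - \operatorname{tr}(S)\lambda + \det(S)$, $S$ is stable (spectral radius $<1$) if and only if
\begin{equation*}
|\det(S)| < 1, \quad 1 + \operatorname{tr}(S) + \det(S) > 0, \quad 1 - \operatorname{tr}(S) + \det(S) > 0.
\end{equation*}
The second step is a direct computation. Writing $\alpha = 1/(\lmin + 1/\rho) > 0$, one obtains
\begin{equation*}
\operatorname{tr}(S) = \chi^\Ne + \zeta + 2\alpha\chi^\Ne, \qquad \det(S) = \chi^\Ne \zeta - 2\alpha\chi^\Ne,
\end{equation*}
where the cross terms in $\chi^{2\Ne}$ cancel. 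In particular, both $\operatorname{tr}(S) - \zeta$ and $\det(S)$ are $O(\chi^\Ne)$ and hence vanish as $\Ne \to \infty$.

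The third step is to take the limit $\Ne \to \infty$ with $\rho$ fixed. The three Jury conditions then converge to $0 < 1$, $1 + \zeta > 0$, and $1 - \zeta > 0$ respectively, all strict because $\zeta \in (0,1)$ by \cref{lem:prs}. Since $\operatorname{tr}(S)$ and $\det(S)$ depend continuously on $\chi^\Ne$, each of the three inequalities is preserved for all sufficiently large $\Ne$, yielding the existence of a finite integer $\Ne^\star$ such that $S$ is stable whenever $\Ne \geq \Ne^\star$. This provides the desired parameter choice.

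I do not expect a genuine obstacle: the argument is essentially a continuity/perturbation computation anchored by the fact that $\chi,\zeta \in (0,1)$, and the cancellation of the $\chi^{2\Ne}$ terms in $\det(S)$ is the only mildly delicate point. A small extension one could optionally include is a quantitative lower bound on $\Ne^\star$ obtained by solving the Jury inequalities explicitly in $\chi^\Ne$, which would give an estimate such as $\chi^\Ne \leq (1-\zeta)/(1 + \zeta + 2\alpha)$ suffices, but this is not needed for the existence claim.
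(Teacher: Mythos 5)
Your proof is correct, but it takes a genuinely different route from the paper's, and in fact the safer one. The paper fixes $\Ne = 1$, computes the (real) eigenvalues of $S$ explicitly, and argues that stability holds when $\chi$ is pushed \emph{towards} $1$; you instead fix $\rho$, apply the Jury/Schur--Cohn test, and push $\chi^{\Ne}$ \emph{towards} $0$ by taking $\Ne$ large, using continuity of $\operatorname{tr}(S)=\zeta+\chi^{\Ne}(1+2\alpha)$ and $\det(S)=\chi^{\Ne}(\zeta-2\alpha)$ in $\chi^{\Ne}$. These are opposite limits, and yours is the one consistent with the actual stability test: the binding Jury condition is
\begin{equation*}
1-\operatorname{tr}(S)+\det(S)=(1-\zeta)(1-\chi^{\Ne})-\frac{4\chi^{\Ne}}{\lmin+1/\rho}>0,
\end{equation*}
which is satisfied precisely when $\chi^{\Ne}$ is small (e.g.\ $\chi^{\Ne}<(1-\zeta)/(1-\zeta+4\alpha)$), whereas the paper's proof requires the reverse inequality $(1-\zeta)(1-\chi^{\Ne})<4\chi^{\Ne}/(\lmin+1/\rho)$ and takes $\chi$ close to $1$ with $\Ne=1$ --- a regime in which $1-\operatorname{tr}(S)+\det(S)\to-4\alpha<0$, so $S$ acquires an eigenvalue larger than $1$. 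In other words, the paper's second inequality appears to have its sense inverted, and your limit $\Ne\to\infty$ (where $S$ tends to an upper-triangular matrix with spectrum $\{0,\zeta\}$) is the construction that actually certifies existence. Two small remarks: your argument implicitly uses $\chi\in(0,1)$, which does follow from $\chi^{\Ne}$-contractiveness, so no gap there; and your optional quantitative threshold $\chi^{\Ne}\leq(1-\zeta)/(1+\zeta+2\alpha)$ is not quite the right constant (the correct one is $(1-\zeta)/(1-\zeta+4\alpha)$), but since you flagged it as inessential this does not affect the existence claim.
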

\begin{proof}
See Appendix~\ref{proof:lem:stable-s}.
\end{proof}

We remark that the eigenvalues of $S$ depend in a highly non-linear fashion on the parameters of the algorithm. Thus, our result ensures the existence of a stabilizing choice, but does not prescribe which parameters choice might be the best performing one. It is therefore difficult to provide a clear theoretical dependence of the convergence on \textit{e.g.} $\Ne$.
On the other hand, $S$ is a $2 \times 2$ matrix independently of the problem size. Thus finding a stabilizing choice of the parameters in practice is computationally very cheap, for example with a grid search.

\subsection{Convergence results}\label{subsec:convergence}
\cref{pr:algorithm-contractive} proved that \cref{alg:main-algorithm} can be modeled as a contractive operator when all the agents are active. This fact then would allow us to characterize the convergence of the algorithm by using Banach-Picard theorem (cf. \cref{lem:banach-picard}). However, in this section we are interested in analyzing convergence with \textit{partial participation}, that is, when only a (random) subset of agents is active at each time $k \in \N$.
In addition to partial participation, we are interested in convergence when a further source of stochasticity is present: \textit{inexact local solvers}. In \cref{subsec:local-training} we have indeed discussed two local solvers that, either due to computational constraints or privacy concerns, introduce inexactness during training.

To account for both partial participation and inexact local training, we can write \cref{alg:main-algorithm} as the following stochastic update
\begin{equation}\label{eq:prs-fed-rand}
    \begin{bmatrix}
        x_{i,k+1} \\ z_{i,k+1}
    \end{bmatrix} = 
    \begin{cases}
        \begin{bmatrix}
            \X_i(x_{i,k}, \z_k) \\
            z_{i,k} + 2 (\X_i(x_{i,k}, \z_k) - y_{k+1})
        \end{bmatrix} + e_{i,k} \\ \hspace{5cm} \text{if} \ u_{i,k} = 1 \\
        \begin{bmatrix}
            x_{i,k} \\
            z_{i,k}
        \end{bmatrix} \hspace{4cm} \text{if} \ u_{i,k} = 0
    \end{cases}
\end{equation}
where $u_{i,k} \sim Ber(p_i)$ is the r.v. which equals $1$ when agent $i$ is active at time $k$, $\X_i$ denotes the $i$-th component of the local updates map $\X$, and $e_{i,k}$ is a random additive noise modeling inexact local updates (\textit{e.g.} due to the use of stochastic gradients, or due to inexact $\prox_{\rho h}$ computations). We remark that, to the best of our knowledge, the state of the art in random operator theory does not allow for more general models of $u_{i,k}$, pointing to a promising future research direction.

We can now formalize the stochastic set-up of~\cref{eq:prs-fed-rand} as follows, which will then allow convergence analysis with the tools of \cref{subsec:stochastic-operators}.

\begin{assumption}[Stochastic set-up]\label{as:stochastic-setup}
\begin{enumerate}
    \item[] 
    
    \item The random variables $\{ u_{i,k} \}_{k \in \N}$ are such that $p_i = \mathbb{P}[u_{i,k} = 1] > 0$, for any $i = 1, \ldots, N$.

    \item Let $\e_k = [e_{1,k}^\top, \ldots, e_{N,k}^\top]^\top$ be the random vector collecting all additive errors. We assume that $\{ \e_k \}_{k \in \N}$ are i.i.d. and such that $\ev{\norm{\e_k}} = \nu < \infty$.
    
    \item For every $k \in \N$, $\{ u_{i,k} \}_{i = 1}^N$ and $\e_k$ are independent.
\end{enumerate}
\end{assumption}

\smallskip

We are now ready to prove convergence of \cref{alg:main-algorithm} when both (stochastic) gradient descent and accelerated gradient descent are employed as local solvers.

\begin{proposition}[Convergence with (stochastic) grad. desc.]\label{pr:convergence}
Let the parameters of \cref{alg:main-algorithm} be such that \cref{pr:algorithm-contractive} holds, and let \cref{as:stochastic-setup} be verified.
Then the sequence $\{ [ \x_k^\top, \z_k^\top ]^\top \}_{k \in \N}$ generated by the algorithm satisfies the following bound
\begin{equation}\label{eq:bound-convergence}
    \ev{\norm{\begin{bmatrix}
        \x_k - \bar{\x} \\ \z_k - \bar{\z}
    \end{bmatrix}}} \leq \sqrt{\frac{\pmax}{\pmin}} \left( \sigma^k \norm{\begin{bmatrix}
        \x_0 - \bar{\x} \\ \z_0 - \bar{\z}
    \end{bmatrix}} + \frac{1 - \sigma^k}{1 - \sigma} \nu \right)
\end{equation}
where $\pmin = \min_i p_i$, $\pmax = \max_i p_i$, and $\sigma = \sqrt{1 - \pmin + \pmin \norm{S}^2}$.
\end{proposition}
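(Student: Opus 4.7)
The plan is to reduce the statement to a direct application of Lemma~\ref{lem:stochastic-banach-picard} (the stochastic Banach-Picard theorem), since \cref{pr:algorithm-contractive} has already packaged \alg into a contractive operator and \cref{as:stochastic-setup} already packages the randomness into exactly the form required by that lemma.

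First, I would recall that by \cref{pr:algorithm-contractive}, the deterministic iteration obtained when every agent is active at every step is the Banach-Picard iteration of a contractive operator $\T$ acting on the joint variable $[\x^\top, \z^\top]^\top \in \R^{2nN}$, with contraction constant upper bounded by $\norm{S}$ and unique fixed point $[\bar\x^\top, \bar\z^\top]^\top$ (where $\bar\x$ solves~\cref{eq:distributed-problem}).

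Second, I would rewrite the update~\cref{eq:prs-fed-rand} in the template of the stochastic Banach-Picard update~\cref{eq:stochastic-bp}, by treating each pair $(x_i, z_i) \in \R^{2n}$ as the $i$-th ``coordinate block'' of the joint state. With probability $p_i$ this block is replaced by the $i$-th component of $\T$ applied to $(\x_k, \z_k)$ and perturbed by $e_{i,k}$, and otherwise it is left unchanged — which is exactly the form of~\cref{eq:stochastic-bp}. \cref{as:stochastic-setup} provides precisely the i.i.d. hypotheses on $\{u_{i,k}\}$ and $\{\e_k\}$, their mutual independence, and the finite moment $\ev{\norm{\e_k}} = \nu$ that Lemma~\ref{lem:stochastic-banach-picard} requires. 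A direct invocation of that lemma with $\zeta = \norm{S}$ then yields the bound~\cref{eq:bound-convergence} with $\sigma = \bar\zeta = \sqrt{1 - \pmin + \pmin \norm{S}^2}$.

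The main obstacle I would flag is a formal one rather than a computational one: Lemma~\ref{lem:stochastic-banach-picard} is stated with coordinate blocks of size $n$, whereas the natural blocks in~\cref{eq:prs-fed-rand} are the pairs $(x_i, z_i)$ of size $2n$, and moreover the sub-operator couples the two components (the update of $z_i$ involves $\X_i(x_{i,k}, \z_k)$, which depends on the whole state). I would therefore verify that the proof of Lemma~\ref{lem:stochastic-banach-picard} in \cite{bastianello_stochastic_2022} passes through unchanged under an arbitrary block decomposition of the ambient space. This is indeed the case, since only the $\zeta$-contractiveness of $\T$ with respect to the chosen block partition and the Bernoulli coordinate selection enter the derivation — not the specific block dimension, nor the internal structure of each block. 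Once this reduction is in place, no additional estimates are needed, and the bound follows immediately with the stated $\sigma$.
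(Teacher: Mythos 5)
Your proposal is correct and takes essentially the same route as the paper: the paper's own proof is a one-line invocation of \cref{pr:algorithm-contractive} for contractiveness followed by \cref{lem:stochastic-banach-picard} under \cref{as:stochastic-setup}, with $\zeta = \norm{S}$. The block-size issue you flag (pairs $(x_i,z_i)$ of dimension $2n$ versus the $n$-dimensional blocks in the lemma's statement) is a legitimate formal point that the paper silently glosses over, and your resolution — that the argument in the cited reference depends only on contractiveness with respect to the chosen block partition and the Bernoulli selection, not on block dimension — is the right one.
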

\begin{proof}
See Appendix~\ref{proof:pr:convergence}.
\end{proof}

\smallskip

\cref{pr:convergence} proves that, for a suitable choice of parameters, \cref{alg:main-algorithm} converges when the agents apply (stochastic) gradient descent during local training.
The following result shows that convergence is achieved also when the agents employ the accelerated gradient descent as a local solver.

\begin{proposition}[Convergence with accelerated gradient]\label{pr:convergence-2}
Let the agents employ the accelerated gradient as local solver. Let the parameters $\rho$ and $\Ne$ be chosen such that the following matrix is stable:
$$
    S' = \begin{bmatrix}
        \chi(\Ne)           & \frac{1 + \chi(\Ne)}{\lmin + 1/\rho} \\
        2 \chi(\Ne)  & \zeta + \frac{2 \chi(\Ne)}{\lmin + 1/\rho}
    \end{bmatrix}
$$
where
$
    \chi(\Ne) := \left( 1 + \frac{\lmax + 1/\rho}{\lmin + 1/\rho} \right) \left( 1 - \sqrt{\frac{\lmin + 1/\rho}{\lmax + 1/\rho}} \right)^\Ne.
$
Then the operator $\T' : \R^{n N} \times \R^{n N} \to \R^{n N} \times \R^{n N}$ characterizing \cref{alg:main-algorithm} (with accelerated gradient) is contractive and, under \cref{as:stochastic-setup}, \cref{pr:convergence} holds.
\end{proposition}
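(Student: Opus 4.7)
The plan is to mirror the proof of \cref{pr:convergence} step by step, with the only change being a different per-iteration contraction factor for the local solver. Since every downstream estimate in the proof of \cref{pr:algorithm-contractive} treats the local contraction constant as an abstract scalar, replacing it by $\chi(\Ne)$ yields an identical matrix recursion governed by $S'$ in place of $S$. Hence the crux of the proof reduces to verifying that the accelerated iteration~\cref{eq:local-update-accelerated}, applied to the local subproblem $d_{i,k} \in \scs{\lmin+1/\rho}{\lmax+1/\rho}{\R^n}$, is $\chi(\Ne)$-contractive around its fixed point as a map from the initial point $w_{i,k}^0 = x_{i,k}$ to the final iterate $w_{i,k}^{\Ne}$.

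For that key estimate I would invoke the standard function-value rate for Nesterov's accelerated method on strongly convex and smooth functions \cite[Algorithm~14]{daspremont_acceleration_2021}: with $\mu' = \lmin + 1/\rho$ and $L' = \lmax + 1/\rho$, one has
\begin{equation*}
    d_{i,k}(w^{\Ne}) - d_{i,k}^\star \leq \left(1 - \sqrt{\mu'/L'}\right)^{\Ne}\left(d_{i,k}(w^0) - d_{i,k}^\star + \tfrac{\mu'}{2}\norm{w^0 - w^\star}^2\right).
\end{equation*}
Applying $\mu'$-strong convexity on the left (so $\tfrac{\mu'}{2}\norm{w^{\Ne} - w^\star}^2 \leq d_{i,k}(w^{\Ne}) - d_{i,k}^\star$) and $L'$-smoothness on the right (so $d_{i,k}(w^0) - d_{i,k}^\star \leq \tfrac{L'}{2}\norm{w^0 - w^\star}^2$), the bound rearranges exactly into $\norm{w^{\Ne} - w^\star} \leq \chi(\Ne)\norm{w^0 - w^\star}$ with $\chi(\Ne)$ as in the statement. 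Taking $w^\star$ to be the unique minimizer of $d_{i,k}$, which is also the fixed point of the accelerated iteration when initialized there, yields the per-agent stability estimate that the coupled recursion in \cref{pr:algorithm-contractive} consumes.

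With this in hand, the argument of \cref{pr:algorithm-contractive} proceeds verbatim: the per-agent bound on $\norm{\x_{k+1} - \bar{\x}}$ combines with the $\zeta$-contraction of $\refl_{\rho f} \circ \refl_{\rho g}$ on the dual variable to produce a coupled two-dimensional recursion on the pair $(\norm{\x_k - \bar{\x}}, \norm{\z_k - \bar{\z}})$ with system matrix $S'$, so stability of $S'$ makes the deterministic, full-participation map $\T'$ contractive with rate at most $\norm{S'}$ and unique fixed point $[\bar{\x}^\top, \bar{\z}^\top]^\top$. The stochastic extension is then immediate: update~\cref{eq:prs-fed-rand} fits the template~\cref{eq:stochastic-bp}, so under \cref{as:stochastic-setup} \cref{lem:stochastic-banach-picard} delivers the bound~\cref{eq:bound-convergence} verbatim with $\sigma = \sqrt{1 - \pmin + \pmin \norm{S'}^2}$. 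The main technical care point is in the first step: the accelerated recursion carries a momentum state, and one must be careful that the claimed contraction refers to the single-valued map $x_{i,k} \mapsto w_{i,k}^{\Ne}$ with both $w^0$ and $u^0$ initialized to $x_{i,k}$ as prescribed by~\cref{eq:local-update-accelerated}, and that it is the distance-to-fixed-point version of the estimate that \cref{pr:algorithm-contractive} actually uses downstream---a generic Lipschitz bound between arbitrary initial conditions would not follow as cleanly from the function-value rate, but is also not needed for the recursion on $S'$.
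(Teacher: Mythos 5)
Your proposal is correct and follows essentially the same route as the paper: the paper likewise proves an auxiliary contraction lemma for the accelerated method from the potential-function rate of \cite[Theorem~4.14]{daspremont_acceleration_2021}, converts it into a distance-to-minimizer bound with factor $\chi(\Ne)$ for the map $x_{i,k}\mapsto w_{i,k}^{\Ne}$, and then reruns the argument of \cref{pr:algorithm-contractive} with $S'$ in place of $S$ before invoking \cref{lem:stochastic-banach-picard} (your observation that only the distance-to-fixed-point estimate is needed downstream is accurate and slightly more careful than the paper's phrasing). The one caveat---shared with the paper's own appendix lemma---is that rearranging the function-value rate via strong convexity and smoothness actually yields the factor $\sqrt{\chi(\Ne)}$ rather than $\chi(\Ne)$, an imprecision that does not change the structure of the argument but would strictly require $S'$ to be built with the square-rooted factor.
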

\begin{proof}
See Appendix~\ref{proof:pr:convergence-2}.
\end{proof}

\subsection{Discussion}\label{subsec:discussion}

\subsubsection{The importance of initialization}\label{subsubsec:initialization}
Initializing the local update computation~\cref{eq:local-update-approx} with $w_{i,k}^0 = x_{i,k}$ is fundamental to guarantee contractiveness of \cref{alg:main-algorithm}, and thus its convergence, as detailed in \cref{subsec:convergence}.
Indeed, intuitively this initialization implements a feedback loop on $x_{i,k}$ the local states, which serve as approximations of \cref{eq:prs-fed-x}. Thus, the feedback ensures that asymptotically $x_{i,k}$ converge to the correct update, yielding a solution to the problem.

\subsubsection{\alg avoids client drift}
Recalling the discussion in \cref{sec:introduction}, early uses of local training as a heuristic to reduce communications were subject to \textit{client drift} \cite{grudzien_can_2023}. Intuitively, when the agents perform local training in these early algorithms, they are acting in an open loop fashion, and the local states \textit{drift} towards local solutions (or fixed points) -- which differ from each other due to heterogeneity. The sporadic communication rounds are then not enough to keep the drifting clients on track.
On the other hand, we have the class of algorithms which do not undergo client drift when applying local training, even in the presence of heterogeneous costs/data \cite{zhang_fedpd_2021,mitra_linear_2021,condat_tamuna_2023,alghunaim_local_2023,grudzien_can_2023}, including the proposed \alg.

In particular, \alg avoids client drift owing to the following design choices. First of all, the Peaceman-Rachford splitting serves as the foundation for \alg, with the agents needing to solve \cref{eq:prs-fed-x}. As discussed in \cref{sec:algorithm}, \cref{eq:prs-fed-x} cannot be solved exactly in finite time, thus the agents approximate is with a finite number of local training epochs $\Ne$. But it is crucial to notice that even if $\Ne \to \infty$, the agents \textit{do not} experience client drift -- because this recovers the Peaceman-Rachford, which converges to the exact solution of \cref{eq:distributed-problem}.
The second design choice is the initialization discussed in \cref{subsubsec:initialization}, which closes the loop on the local training steps.
Together, these yield a convergent algorithm which does not experience client drift for any choice of $\Ne \in [1, \infty]$.

\subsubsection{Exact convergence}
Notice that when exact local solvers are employed (hence $\nu = 0$ in \cref{as:stochastic-setup}) then \cref{pr:convergence,pr:convergence-2} prove exact convergence to the fixed point $[\bar{\x}^\top \ \bar{\z}^\top]^\top$, \emph{even in the presence of partial participation}. As a consequence, the proposed algorithm indeed allows for the use of local training without degrading accuracy (objective (i) in \cref{subsec:design-objectives}).

\subsubsection{Asymptotic convergence of \alg}
\cref{pr:convergence} ensures that, despite the presence of additive errors, the algorithm does not diverge. In particular, taking the in \cref{eq:bound-convergence} we get
$$
    \limsup_{k \to \infty} \norm{\begin{bmatrix}
        \x_k - \bar{\x} \\ \z_k - \bar{\z}
    \end{bmatrix}} = \sqrt{\frac{\pmax}{\pmin}} \frac{\nu}{1 - \sigma}.
$$
This shows that asymptotically only the additive errors impact the convergence accuracy (through $\nu$), while the initial condition is irrelevant. This result is on par with stochastic optimization algorithms (\textit{e.g.} SGD), which indeed converge to a neighborhood of the optimal solution, whose radius depends on the magnitude of the additive errors \cite{gorbunov_unified_2020,bastianello_stochastic_2022}.

We can now analyze how to improve accuracy by reducing the value of the asymptotic bound. First of all, we notice that using full participation ($\pmax = \pmin = 1$) reduces both $\sqrt{\pmax / \pmin}$ and $\sigma$.
Moreover, we can choose the parameters of \cref{alg:main-algorithm} ($\Ne$, $\rho$, $\gamma$) to further reduce $\sigma$ by minimizing $\norm{S}$.
Finally, we can reduce $\nu$ through changes in the local solver. For example, if $\nu$ is due to the use of stochastic gradients, then we can replace them with more accurate gradient estimators \cite{gorbunov_unified_2020}.

\subsubsection{Convergence rate tightness}\label{subsubsec:tightness}
Resorting to an operator theoretical convergence analysis allows us to prove convergence in a systematic way, including different types of local solvers. However, there are currently some drawbacks to this proof technique: 1) our simulations suggest that the resulting convergence rate bound is not tight, and thus 2) the analysis does not serve as a precise comparison of the rate w.r.t. the state of the art and centralized counterparts (as done in \cite{mitra_linear_2021}); 3) the theoretical rate does not capture the fact that the best convergence rate is achieved for a small value of $\Ne$ rather than for $\Ne \to \infty$ (which yields the original PRS); 4) the analysis does not give insight into whether using accelerated local solvers can improve convergence. Therefore, future research is needed to derive tighter convergence rate bounds which better model the empirical behavior; this analysis could be done for example through performance estimation \cite{ryu_operator_2020}.

\section{Privacy Analysis}\label{sec:privacy}

In this section, we prove that \cref{alg:main-algorithm}, utilizing the noisy gradient descent~\cref{eq:private-local-update-approx} as local training method, guarantees RDP for every data sample in $\mathcal{D}:=\cup_{i=1}^N \mathcal{D}_i$.
We will derive this result in the case of full agents' participation, which can provide an upper bound to the privacy guarantee with partial participation.
Before proceeding to the analysis, we make the following assumption.

\begin{assumption}[Sensitivity]\label{assu:sensitivity}
For each $i \in \{ 1, \ldots, N \}$, there exists some $L > 0$ such that
\begin{equation*}
    \norm{\nabla f_i^{\mathcal{D}_i}(x) - \nabla f_i^{\mathcal{D}'_i}(x)} \leq \frac{L}{q_i}, \quad \forall x \in \R^n
\end{equation*}
where $\nabla f_i^{\mathcal{D}_i}$ denotes the gradient of $f_i$ induced by the dataset $\mathcal{D}_i$, and $\mathcal{D}_i$ and $\mathcal{D}'_i$ are neighboring datasets of size $q_i$.
\end{assumption}

\cref{assu:sensitivity} quantifies the sensitivity of local gradient queries, which is a well-established concept in the field of machine learning with differential privacy \cite{chourasia2021differential,altschuler_privacy_2022}. It is commonly satisfied in various learning problems, such as those involving $l_2$ regularized logistic loss (where the regularization ensures that \cref{as:costs} is also satisfied).
For more general loss functions, the use of clipped gradients during training guarantees that \cref{assu:sensitivity} is verified \cite{andrew2021differentially}. Specifically, the gradients are clipped using the expression $\nabla \ell_i(x_i; \xi_i^{(j)})\cdot \min \left\{ 1, \frac{L}{2\lVert \nabla \ell_i(x_i; \xi_i^{(j)}) \rVert } \right\}$, where $\xi_i^{(j)} \in \mathcal{D}_i$.

\smallskip

We are now ready to provide our privacy guarantee.

\begin{proposition}[Privacy]\label{pr:privacy}
Suppose \cref{as:costs,assu:sensitivity} hold. Let the agents employ the noisy gradient descent~\cref{eq:private-local-update-approx} with noise variance $\tau^2 < \infty$ as local solver. If the step-size $\gamma < 2/(\lmax+ \rho^{-1})$ and each $x_{i,0}$ is drawn from $\mathcal{N}(0,2\tau^2 I_n/\lmin)$, then \cref{alg:main-algorithm} satisfies $(\lambda,\varepsilon_i)$-R\'enyi differential privacy for agent $i$ with 
$
\varepsilon_i \leq 	\frac{\lambda L^2}{\lmin \tau^2 q_i^2}\left(1-e^{-\lmin\gamma K\Ne/2} \right).
$
Additionally, denoting $\ubar{q} = \min_i q_i$, \cref{alg:main-algorithm} in the worst case satisfies $(\lambda,\varepsilon)$-R\'enyi differential privacy with 
\begin{equation*}
\varepsilon \leq 	\frac{\lambda L^2}{\lmin \tau^2 \ubar{q}^2}\left(1-e^{-\lmin\gamma K\Ne/2} \right).
\end{equation*}

\end{proposition}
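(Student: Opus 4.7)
The plan is to bound the R\'enyi divergence between the sequence of messages that agent $i$ transmits to the coordinator under two neighboring datasets $\mathcal{D}$ and $\mathcal{D}'$ that differ in a single record of $\mathcal{D}_i$, and then match this to the stated RDP bound. Since the transmitted quantity $z_{i,k+1} = z_{i,k} + 2(x_{i,k+1} - y_{k+1})$ is a deterministic affine function of $x_{i,k+1}$ and already-exchanged quantities, the post-processing property of RDP reduces the analysis to $D_\lambda$ between the laws of the concatenated inner iterates $(w_{i,k}^\ell)_{k,\ell}$ generated over the $K\Ne$ local noisy gradient steps. By \cref{assu:sensitivity} the only source of divergence between the two runs is the gradient of $f_i$, whose discrepancy is bounded by $L/q_i$; the proximal term $(w - v_{i,k})/\rho$ contributes identically in the two runs once histories are coupled.

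Next, I would invoke the R\'enyi divergence recursion for strongly convex noisy gradient descent from \cite{chourasia2021differential}. Applied to a single inner step, with strong convexity modulus lower bounded by $\lmin$ (since $d_{i,k} \in \scs{\lmin + 1/\rho}{\lmax + 1/\rho}{\R^n}$) and the step-size restriction $\gamma < 2/(\lmax + 1/\rho)$, that result yields a recursion of the form
\begin{equation*}
    D_\lambda(\mu_{t+1} \| \mu'_{t+1}) \leq e^{-\lmin\gamma/2}\, D_\lambda(\mu_t \| \mu'_t) + \frac{\lambda \gamma L^2}{2 q_i^2 \tau^2},
\end{equation*}
where $\mu_t, \mu'_t$ are the marginal laws of the inner iterate indexed by the global counter $t = k\Ne + \ell$. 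Unrolling across all $K\Ne$ steps and summing the geometric series gives
\begin{equation*}
    D_\lambda(\mu_{K\Ne} \| \mu'_{K\Ne}) \leq e^{-\lmin\gamma K\Ne/2}\, D_\lambda(\mu_0 \| \mu'_0) + \frac{\lambda L^2}{\lmin \tau^2 q_i^2}\bigl(1 - e^{-\lmin\gamma K\Ne/2}\bigr).
\end{equation*}
The specified initialization $x_{i,0} \sim \mathcal{N}(0, 2\tau^2 I_n/\lmin)$ is dataset-independent, so $D_\lambda(\mu_0 \| \mu'_0) = 0$ and the first term vanishes. This yields the per-agent bound on $\varepsilon_i$, and the worst-case bound follows by replacing $q_i$ with $\ubar{q} \leq q_i$.

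The principal obstacle I expect is justifying that the per-step contraction $e^{-\lmin\gamma/2}$ survives across outer iteration boundaries. The proximal reference point $v_{i,k}$ changes from one outer iteration to the next and itself depends on the past, and thus on the dataset through agent $i$'s own prior transmissions; the analysis therefore cannot simply treat the chain of $K\Ne$ steps as noisy gradient descent on a fixed potential. A careful argument must either couple the two trajectories using shared randomness and absorb the $v_{i,k}$ mismatch into an effective sensitivity, or work with a conditional R\'enyi divergence that conditions on the coordinator's broadcasts so the $v_{i,k}$ contributions cancel. A secondary point worth clarifying is the specific role of the initialization variance $2\tau^2/\lmin$: it matches the stationary variance of a quadratic potential with modulus $\lmin/2$ under noise scaling $\sqrt{2\gamma}\,\tau$, which is precisely what is needed for a shifted-divergence-type argument to make the final bound depend only on the product $K\Ne$ rather than on $K$ and $\Ne$ separately.
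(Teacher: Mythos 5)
Your proposal follows essentially the same route as the paper: reduce the divergence to agent $i$'s iterates, apply the R\'enyi-privacy-dynamics recursion of Chourasia et al.\ for noisy gradient descent on the $(\lmin+1/\rho)$-strongly convex potential $d_{i,k}$, chain it across the $K\Ne$ inner steps via the warm start $w_{i,k}^0=x_{i,k}$, and kill the initial divergence with the dataset-independent initialization. The obstacle you flag about $v_{i,k}$ is handled in the paper exactly as your second option suggests (the diffusion-process argument is carried out conditioned on observing $v_{i,k}$, so the proximal term cancels between the two runs), and the remaining differences are cosmetic: the paper reduces to agent $i$ via additivity of R\'enyi divergence over independent components rather than post-processing, and quotes the per-step contraction in its fixed-point form, which is what delivers the exact constant $\frac{\lambda L^2}{\lmin\tau^2 q_i^2}$ rather than the slightly larger one your additive term $\frac{\lambda\gamma L^2}{2q_i^2\tau^2}$ produces after summing the geometric series.
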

\begin{proof}
See Appendix~\ref{proof:pr:privacy}.
\end{proof}


By applying \cref{lem:DP_Conv}, we can convert the RDP guarantee presented in \cref{pr:privacy} into an $(\varepsilon,\delta)$-ADP guarantee, where $\varepsilon$ is given by the following expression:

\begin{equation*}
\varepsilon = \frac{\lambda L^2}{\lmin \tau^2 \ubar{q}^2}\left(1-e^{-\lmin\gamma K\Ne/2} \right) + \frac{\log(1/\delta)}{\lambda-1}, \quad \forall \delta \in (0,1).
\end{equation*}
This result reveals that the privacy loss of \cref{alg:main-algorithm} grows as the number of iterations (local $\Ne$ or global $K$) increases, eventually reaching a \textit{constant upper bound}. This implies that communication efficiency (which encourages choices of $\Ne > 1$) can be achieved without degrading privacy beyond this constant upper bound.
This characterization is based on the dynamics analysis of RDP loss in \cite{chourasia2021differential}, and is arguably tighter than previous approaches based on advanced composition of differential privacy \cite{kairouz2015composition}, privacy amplification by iteration \cite{feldman2018privacy}, and privacy amplification by subsampling \cite{girgis2021shuffled,liu2023privacy} (cf. the discussion in \cite{chourasia2021differential}).

\smallskip

The use of noisy gradient descent as a privacy preserving mechanism of course has an impact on the accuracy of \cref{alg:main-algorithm}. The following result quantifies the inexact convergence of the algorithm for a given privacy threshold.

\begin{corollary}[Accuracy]\label{cor:privacy-utility}
Consider \cref{alg:main-algorithm} with noisy gradient descent~\cref{eq:private-local-update-approx} as local training method, with noise variance $\tau^2 < \infty$ and step-size $\gamma < 2/(\lmax + \rho^{-1})$. Suppose that each $x_{i,0}$ is drawn from $\mathcal{N}(0, 2\tau^2 I_n/\lmin)$, and that \cref{as:costs,as:stochastic-setup} hold. Then at time $K \in \N$ we have
\begin{align}
    \ev{\norm{\begin{bmatrix}
        \x_K - \bar{\x} \\ \z_K - \bar{\z}
    \end{bmatrix}}} &\leq \norm{S}^K \norm{\begin{bmatrix}
        \x_0 - \bar{\x} \\ \z_0 - \bar{\z}
    \end{bmatrix}} + \label{eq:privacy-accuracy} \\ &+ \frac{1 - \norm{S}^K}{1 - \norm{S}} \tau \sqrt{10 n N \gamma} \frac{1 - \chi^\Ne}{1 - \chi} \nonumber
\end{align}
where recall that $\chi = \max\{ |1 - \gamma (\lmin+1/\rho)|, |1 - \gamma (\lmax+1/\rho)| \}$.
\end{corollary}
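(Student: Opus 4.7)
The plan is to derive the corollary as a direct specialization of \cref{pr:convergence} to the setting where the local solver is the noisy gradient descent in~\cref{eq:private-local-update-approx}, under full agent participation (consistent with the scope of \cref{pr:privacy}). Under full participation one has $\pmin = \pmax = 1$, so that the convergence bound in \cref{pr:convergence} simplifies with $\sqrt{\pmax/\pmin} = 1$ and $\sigma = \sqrt{1 - 1 + \norm{S}^2} = \norm{S}$, matching exactly the leading structure of the bound~\cref{eq:privacy-accuracy}. All that remains, therefore, is to compute an explicit upper bound on the noise level $\nu = \ev{\norm{\e_k}}$ induced by the Gaussian perturbations $t_{i,k}^\ell$ along the $\Ne$ local steps.

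To do so, I would introduce the noiseless shadow iterate $\tilde{w}_{i,k}^\ell$ generated by exact gradient descent on $d_{i,k}$ with the same step-size $\gamma$ and the same starting point $\tilde{w}_{i,k}^0 = w_{i,k}^0 = x_{i,k}$, so that $\tilde{w}_{i,k}^\Ne = \X_i(x_{i,k},\z_k)$ is exactly the deterministic output appearing in~\cref{eq:prs-fed-rand}. Setting $\Delta_{i,k}^\ell := w_{i,k}^\ell - \tilde{w}_{i,k}^\ell$, the assumption $\gamma < 2/(\lmax + \rho^{-1})$ together with \cref{lem:gradient-descent} applied to $d_{i,k} \in \scs{\lmin+1/\rho}{\lmax+1/\rho}{\R^n}$ ensures that the gradient descent operator is $\chi$-contractive with $\chi < 1$, giving the one-step recursion
\[
    \norm{\Delta_{i,k}^{\ell+1}} \leq \chi \norm{\Delta_{i,k}^\ell} + \norm{t_{i,k}^\ell}.
\]
Unrolling from $\Delta_{i,k}^0 = 0$ yields $\norm{\Delta_{i,k}^\Ne} \leq \sum_{\ell=0}^{\Ne-1} \chi^{\Ne-1-\ell} \norm{t_{i,k}^\ell}$, and a Cauchy--Schwarz split of this sum combined with the second moment identity $\ev{\norm{t_{i,k}^\ell}^2} = 2\gamma \tau^2 n$ of the Gaussian noise (which is the key place one must not drop the $2\gamma$ factor hidden in the definition of $t_{i,k}^\ell$) leads to
\[
    \ev{\norm{\Delta_{i,k}^\Ne}^2} \leq 2 n \gamma \tau^2 \frac{(1-\chi^\Ne)^2}{(1-\chi)^2}.
\]

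The next step is to relate $\e_k$ to $\Delta_{i,k}^\Ne$. Reading off the stochastic representation~\cref{eq:prs-fed-rand}, each per-agent error block has the form $e_{i,k} = [\Delta_{i,k}^\Ne;\, 2 \Delta_{i,k}^\Ne]$, so $\norm{e_{i,k}}^2 = 5 \norm{\Delta_{i,k}^\Ne}^2$; stacking across agents and using independence of the noise across agents together with Jensen's inequality gives
\[
    \ev{\norm{\e_k}} \leq \sqrt{\textstyle\sum_{i=1}^N \ev{\norm{e_{i,k}}^2}} \leq \tau \sqrt{10 n N \gamma}\, \frac{1-\chi^\Ne}{1-\chi} =: \nu.
\]
Finally, substituting this $\nu$ and the simplification $\sigma = \norm{S}$ into the bound of \cref{pr:convergence} yields exactly~\cref{eq:privacy-accuracy}, completing the argument. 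The only delicate step is the second-moment bookkeeping for $\Delta_{i,k}^\Ne$: one must make sure to apply Cauchy--Schwarz before taking expectation (to keep the geometric factor $(1-\chi^\Ne)/(1-\chi)$ inside the square root rather than outside), and to correctly track the $2\gamma$ scaling of the variance of $t_{i,k}^\ell$; once that is done the rest is assembly.
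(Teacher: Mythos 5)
Your proposal is correct and follows essentially the same route as the paper: write the noisy-gradient update as the exact update plus an error $\e_k = [1;2]\otimes(\hat{\X}-\X)$, unroll the $\chi$-contractive local recursion against a noiseless shadow trajectory to accumulate the Gaussian perturbations geometrically, bound $\nu = \ev{\norm{\e_k}}$ by $\tau\sqrt{10 n N \gamma}\,(1-\chi^\Ne)/(1-\chi)$, and plug into \cref{pr:convergence}. The only (immaterial) difference is bookkeeping: the paper sums first moments $\ev{\norm{\tv_k^\ell}} \leq \tau\sqrt{2\gamma n N}$ directly, while you go through second moments with Cauchy--Schwarz and Jensen; both yield the same constant, and your explicit note that full participation gives $\sigma = \norm{S}$ is a point the paper leaves implicit.
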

\begin{proof}
See Appendix~\ref{proof:cor:privacy-utility}.
\end{proof}

\smallskip

To summarize, injecting noise drawn from $\sqrt{2 \gamma} \mathcal{N}(0, \tau^2 \Im_{nN})$ guarantees that \cref{alg:main-algorithm} is $(\lambda,\varepsilon)$-R\'enyi differentially privacy with
\begin{equation*}
\varepsilon \leq 	\frac{\lambda L^2}{\lmin \tau^2 \min_i q_i}^2\left(1-e^{-\lmin\gamma K\Ne/2} \right)
\end{equation*}
and the accuracy is upper bounded by \cref{eq:privacy-accuracy}.
Therefore, the larger the noise variance $\tau^2$ is, the less accurate the algorithm becomes. This captures the well-known trade-off between guaranteeing privacy and achieving accuracy \cite{bassily2014private,chaudhuri2011differentially}.
Recalling the discussion after \cref{pr:privacy}, we remark again that despite this privacy-accuracy trade-off, communication efficiency of \alg does not affect privacy (and vice versa). Indeed, the privacy does not exceed a constant upper bound for any choice of $\Ne$, which can then be freely chosen to achieve communication efficiency without worrying about privacy.

\section{Numerical Results}\label{sec:numerical}
In this section we evaluate the performance of the proposed \alg, and compare it with the alternative algorithms discussed in \cref{subsec:theoretical-comparison}.
We consider a \textit{logistic regression} problem characterized by the local loss functions
$$
    f_i(x) = \frac{1}{q_i} \sum_{h = 1}^{q_i} \log\left( 1 + \exp\left( - b_{i,h} a_{i,h} x \right) \right) + \epsilon r(x)
$$
where $a_{i,h} \in \R^{1 \times n}$ and $b_{i,h} \in \{ -1, 1 \}$, and $r : \R^n \to \R$ is a regularization function.
In particular, we have $N = 100$, $n = 5$, $q_i = 250$, and $\epsilon = 0.5$. The local data points $\{ (a_{i,h}, b_{i,h}) \}_{h = 1}^{q_i}$ are randomly generated, to have a roughly $50-50$ split between the two classes.

In the following, we consider both convex costs with the regularization $r(x) = \norm{x}^2 / 2$, and non-convex costs with $r(x) = \sum_{j = 1}^n [x]_j^2 / (1 + [x]_j^2)$, where $[x]_j$ selects the $j$ component of the vector \cite{alghunaim_local_2023}.

We measure the convergence speed of the algorithms applied to these problems as follows. Let $\tg$ denote the computational time (in unspecified units) required by an agent to evaluate a local gradient, and $\tc$ the computational time for a round of agent-coordinator communication. Then, as metric we use the computational time required to reach $\norm{\sum_{i = 1}^N \nabla f_i(\bar{x})}^2 \leq 10^{-5}$, with $\bar{x} = \sum_{i = 1}^N x_i / N$.

Finally, the simulations are implemented in Python building on top of the \texttt{tvopt} library \cite{bastianello_tvopt_2021}. All the results involving randomness are averaged over $100$ Monte Carlo iterations.

\subsection{Comparison with the state of the art}\label{subsec:comparison-solvers}
We start by comparing the proposed \alg with the alternative algorithms discussed in \cref{subsec:theoretical-comparison}: FedPD \cite{zhang_fedpd_2021}, FedLin \cite{mitra_linear_2021}, TAMUNA \cite{condat_tamuna_2023}, LED \cite{alghunaim_local_2023}, and 5GCS \cite{grudzien_can_2023}.
For all algorithms except TAMUNA \cite{condat_tamuna_2023}, we set the number of local training epochs to $\Ne = 5$; for TAMUNA, the number of local epochs at time $k$ is drawn from a random geometric distribution with mean equals to $\Ne$.
The other parameters of the algorithms (\textit{e.g.} step-size of the local solvers) are tuned to achieve the best performance possible.

In \cref{tab:comparison-cvx-noncvx} we compare the performance of the algorithms in both the convex and nonconvex settings, assigning the time costs $\tg = 1$ and $\tc = 10$. The table also reports the time required by the algorithms at each iteration $k \in \N$.
\begin{table}[!ht]
\centering
\caption{Convergence speed comparison in the convex and nonconvex settings, with $\tg = 1$ and $\tc = 10$.}
\label{tab:comparison-cvx-noncvx}
    \begin{tabular}{cccc}
    \hline
    Algorithm [Ref.] & \thead{Comp. time} & Convex & Nonconvex \\
    \hline
    FedPD \cite{zhang_fedpd_2021} & $(\Ne \tg + \tc) N$ & $70.5$k & $223.5$k \\
    FedLin \cite{mitra_linear_2021} & $((\Ne+1) \tg + 2 \tc) N$ & $15.6$k & $31.2$k \\
    TAMUNA \cite{condat_tamuna_2023} & $(\Ne \tg + \tc) N^\dagger$ & $25.5$k & $-$ \\
    LED \cite{alghunaim_local_2023} & $(\Ne \tg + \tc) N$ & $51$k & $438$k \\
    5GCS \cite{grudzien_can_2023} & $(\Ne \tg + \tc) N$ & $57$k & $39$k \\
    \hline
    \alg [this work] & $(\Ne \tg + \tc) N$ & $\mathbf{13.5}$\textbf{k} & $\mathbf{21}$\textbf{k} \\
    \hline
    \end{tabular}
    \\\vspace{0.1cm}
    $^\dagger$ in mean
\end{table}
As we can see, in this set-up the \alg outperforms the alternatives, reaching the prescribed accuracy in a shorter time. However, this is not always the case, as highlighted by \cref{tab:comparison-tc}, which reports the computation time -- in the convex setting -- for different choices of $\tc$. Indeed, we see that \alg is slower than FedLin \cite{mitra_linear_2021} when communications are computationally cheaper. Interestingly, FedLin achieves the best theoretical rate (as compared to centralized algorithms) \cite{mitra_linear_2021}, but to do so requires two rounds of communications at each step (see \cref{tab:comparison-cvx-noncvx}). Thus, when communication become more expensive, \alg can outperform FedLin, since it uses only one round of communication.
Moreover, we point out that \alg indeed achieves convergence in nonconvex scenarios as well, pointing to interesting future research directions.
\begin{table}[!ht]
\centering
\caption{Convergence speed comparison in the convex setting, for different values of $\tc$, with $\tg = 1$.}
\label{tab:comparison-tc}
    \begin{tabular}{ccccc}
    \hline
    Algorithm [Ref.] & $\tc = 0.1$ & $\tc = 1$ & $\tc = 10$ & $\tc = 100$ \\
    \hline
    FedPD \cite{zhang_fedpd_2021} & $23.97$k & $28.2$k & $70.5$k & $493.5$k \\
    FedLin \cite{mitra_linear_2021} & $\mathbf{3.72}$\textbf{k} & $\mathbf{4.8}$\textbf{k} & $15.6$k & $123.6$k \\
    TAMUNA \cite{condat_tamuna_2023} & $8.67$k & $10.2$k & $25.5$k & $178.5$k \\
    LED \cite{alghunaim_local_2023} & $17.34$k & $20.4$k & $51$k & $357$k \\
    5GCS \cite{grudzien_can_2023} & $19.38$k & $22.8$k & $57$k & $399$k \\
    \hline
    \alg [this work] & $4.59$k & $5.4$k & $\mathbf{13.5}$\textbf{k} & $\mathbf{94.5}$\textbf{k} \\
    \hline
    \end{tabular}
\end{table}

In \cref{tab:comparison-solvers} we compare \alg with the other algorithms that are designed to use different solvers (gradient or accelerated gradient) and/or partial participation. When employing partial participation, only $N/2$ agents (selected uniformly at random) are active at any iteration. We can see that \alg outperforms the alternatives except in the last set-up.
\begin{table}[!ht]
\centering
\caption{Convergence speed comparison in the convex setting, for different solvers and with partial participation, with $\tg = 1$ and $\tc = 10$.}
\label{tab:comparison-solvers}
    \begin{tabular}{ccccc}
    \hline
    Algorithm [Ref.] & Grad. & \thead{Gradient \\ Partial p.} & Acc. grad. & \thead{Acc. grad. \\ Partial p.} \\
    \hline
    TAMUNA \cite{condat_tamuna_2023} & $25.5$k & $53.25$k & $-$ & $-$ \\
    5GCS \cite{grudzien_can_2023} & $57$k & $22.5$k & $57$k & $\mathbf{23.25}$\textbf{k} \\
    \hline
    \textbf{\alg} [this work] & $\mathbf{13.5}$\textbf{k} & $\mathbf{21.75}$\textbf{k} & $\mathbf{15}$\textbf{k} & $28.5$k \\
    \hline
    \end{tabular}
    \\\vspace{0.1cm}
    $-$ indicates that the algorithm is not designed for the scenario
\end{table}
We further notice that partial participation has an opposite effect on the speed of \alg and 5GCS \cite{grudzien_can_2023}, with the former becoming slower as fewer agents participate. This is in line with the theoretical results of \cref{sec:convergence}; indeed, the convergence rate of \alg is upper bounded by $\sigma = \sqrt{1 - \pmin + \pmin \norm{S}^2}$ with $\pmin = \min_i p_i$, and the smaller $\pmin$ (hence, the less participation), the closer this rate gets to $1$. This phenomenon is further explored in \cref{subsec:partial-participation}.
Another observation is that using gradient descent as local solver in \alg yields faster convergence than accelerated gradient. A possible explanation is that the overall convergence is not dominated by the convergence rate of the local solver. For example, the convergence may be dominated by $\zeta$, the convergence rate of the exact PRS (cf. \cref{pr:algorithm-contractive}). Additional theoretical research is needed to properly analyze this phenomenon and its generality (see also \cref{subsubsec:tightness}).

We conclude comparing in \cref{tab:comparison-larger} the performance of the different algorithms for a problem of much larger size $n = 100$ (rather than $n = 5$). The time costs $\tc$ and $\tg$ were scaled to account for the larger size of the problem.
\begin{table}[!ht]
\centering
\caption{Convergence speed comparison in the convex setting with $n = 100$, for different values of $\tc$, with $\tg = 20$.}
\label{tab:comparison-larger}
    \begin{tabular}{ccccc}
    \hline
    Algorithm [Ref.] & $\tc = 2$ & $\tc = 20$ & $\tc = 200$ & $\tc = 2000$ \\
    \hline
    FedPD \cite{zhang_fedpd_2021} & $693.6$k & $816$k & $2.04$M & $14.28$M \\
    FedLin \cite{mitra_linear_2021} & $\mathbf{86.8}$\textbf{k} & $\mathbf{112}$\textbf{k} & $364$k & $2.884$M \\
    TAMUNA \cite{condat_tamuna_2023} & $244.8$k & $288$k & $720$k & $5.04$M \\
    LED \cite{alghunaim_local_2023} & $346.8$k & $408$k & $1.02$M & $7.14$M \\
    5GCS \cite{grudzien_can_2023} & $306$k & $360$k & $900$k & $6.3$M \\
    \hline
    \alg [this work] & $102$k & $120$k & $\mathbf{300}$\textbf{k} & $\mathbf{2.1}$\textbf{M} \\
    \hline
    \end{tabular}
\end{table}
We notice that the same relative performance as in \cref{tab:comparison-tc} can be observed when applying the algorithms to this larger problem, which testifies to the good scalability properties of \alg.

\subsection{\alg with partial participation}\label{subsec:partial-participation}
In the previous section we noticed that using partial participation ($50\%$ of agents) may slow down the convergence of \alg. In \cref{tab:partial-participation} we report the computation time when different percentages of the agents are active at each iteration.
\begin{table}[!ht]
\centering
\caption{Convergence rate of \alg with partial participation.}
\label{tab:partial-participation}
    \begin{tabular}{ccc}
    \hline
    $\%$ active agents & Convergence rate \\
    \hline
    $40\%$  &   $22.8$k \\
    $50\%$  &   $21.75$k \\
    $60\%$  &   $19.8$k \\
    $70\%$  &   $19.95$k \\
    $80\%$  &   $16.8$k \\
    $90\%$  &   $16.2$k \\
    $100\%$ &   $\mathbf{13.5}$\textbf{k} \\
    \hline
    \end{tabular}
\end{table}
We can see that indeed the convergence tends to speed up as more agents participate, although the trend is not strictly monotonic. This is in part due to the randomness of the active agents selection, and in part because the computation time changes with the number of active agents.

\subsection{Privacy-preserving \alg}\label{subsec:privacy-results}
We turn now to studying the performance of \alg when the privacy-preserving noisy gradient descent is used as a local solver.
As discussed in \cref{sec:privacy}, the use of such solver results in a decrease in accuracy of the learned model, with the asymptotic error being bounded by a non-zero value (cf. \cref{cor:privacy-utility}).
In \cref{tab:noisy-gradient-variance} we report the empirical value of such asymptotic error when different values of the noise variance $\tau$ are used. As expected, the larger the variance is the larger the asymptotic error; and considering that we have $100$ agents, each injecting noise, this error can quickly grow. Therefore, setting the parameters of \alg should strike the right balance between privacy and accuracy.
\begin{table}[!ht]
\centering
\caption{Asymptotic error of \alg when using privacy-preserving noisy gradient with different variances, in the convex setting with $\tg = 1$ and $\tc = 10$.}
\label{tab:noisy-gradient-variance}
    \begin{tabular}{ccc}
    \hline
    Noise variance $\tau$ & Asymptotic err. \\
    \hline
    $10^{-6}$   &   $2.583 \times 10^{-2}$ \\
    $10^{-5}$   &   $7.369 \times 10^{-2}$ \\
    $10^{-4}$   &   $2.524 \times 10^{-1}$ \\
    $10^{-3}$   &   $7.920 \times 10^{-1}$ \\
    $10^{-2}$   &   $2.565$ \\
    $10^{-1}$   &   $8.116$ \\
    $1$         &   $26.055$ \\
    $10$        &   $85.273$ \\
    \hline
    \end{tabular}
\end{table}

\subsection{Tuning the parameters of \alg}\label{subsec:tuning-parameters}
The results of the previous section were obtained by setting $\Ne = 5$ in \alg, and tuning the penalty $\rho$ to achieve an optimal convergence rate. In this section we further characterize the effect of both $\rho$ and $\Ne$ on \alg's convergence.

\cref{tab:rho} reports the computation time for three choices of $\rho$, and we can observe that impact of this parameter on the convergence speed is not monotone. Indeed, values around $\rho = 1$ show empirically the best performance, similarly to the PRS from which \alg is derived, see \textit{e.g.} \cite[Figures~1,~2]{ryu_operator_2020}.

\begin{table}[!ht]
\centering
\caption{Performance of \alg for different values of $\rho$, in the convex setting with $\tg = 1$ and $\tc = 10$.}
\label{tab:rho}
    \begin{tabular}{c|ccc}
    \hline
    $\rho$ & $0.1$ & $1$ & $10$ \\
    Comp. time    &   $184.5$k & $58.5$k & $243$k \\
    \hline
    \end{tabular}
\end{table}

\cref{tab:Ne} compares the performance of \alg for different numbers of local training epochs $\Ne$, and with different choices of $\tc$.
\begin{table}[!ht]
\centering
\caption{Performance of \alg for different values of $\Ne$, in the convex setting with $\tg = 1$ and varying $\tc$.}
\label{tab:Ne}
    \begin{tabular}{ccccc}
    \hline
    $\Ne$ & $\tc = 0.1$ & $\tc = 1$ & $\tc = 10$ & $\tc = 100$ \\
    \hline
    $1$ & $3.19$k & $5.8$k & $31.9$k & $292.9$k \\
    $2$ & $\mathbf{3.15}$\textbf{k} & $\mathbf{4.5}$\textbf{k} & $18$k & $153$k \\
    $5$ & $4.59$k & $5.4$k & $\mathbf{13.5}$\textbf{k} & $94.5$k \\
    $8$ & $6.48$k & $7.2$k & $14.4$k & $\mathbf{86.4}$\textbf{k} \\
    $10$ & $8.08$k & $8.8$k & $16$k & $88$k \\
    $20$ & $16.08$k & $16.8$k & $24$k & $96$k \\
    \hline
    \end{tabular}
\end{table}
Interestingly, the convergence speed is not a monotonically decreasing function of $\Ne$. Rather, the best performance is achieved for intermediate (finite) values of $\Ne$, despite the fact that \alg is built on the blueprint of PRS which has $\Ne \to \infty$. Providing a theoretical understanding of this observation is an interesting future direction.
We remark that a similar phenomenon is observed for the ProxSkip algorithm \cite{mishchenko_proxskip_2022}, which at each iteration performs a round of communications with some probability. The authors show that the convergence rate is the same for all values of communication probability in $[1/\sqrt{\lmin \lmax}, 1]$; thus, the communication burden can be reduced without impacting the speed of convergence.
Finally, we notice that the optimal choice of $\Ne$ changes based on the relative time cost of communications and gradient evaluations. In particular, as $\tc$ grows larger, so does the optimal $\Ne$, confirming that more local training is the solution to expensive communications.

\appendices

\section{Proofs of \Cref{sec:algorithm}}\label{proof:sec:algorithm}

\subsection{Proof of \cref{lem:prox-g}}

By \cref{def:proximal}, to compute the proximal of $g$ we need to solve:
\begin{align*}
    &\prox_{\rho g}(\z) = \argmin_{\x \in \R^{n N}} \left\{ \iota_\C(\x) + h(x_1) + \frac{1}{2\rho} \norm{\x - \z}^2 \right\} \\
    &\overset{(i)}{=} \1 \otimes \argmin_{x \in \R^n} \left\{ h(x) + \frac{1}{2\rho} \sum\nolimits_i \norm{x - z_i}^2 \right\} \\
    &\overset{(ii)}{=} \1 \otimes \argmin_{x \in \R^n} \left\{ h(x) + N \frac{1}{2\rho} \norm{x - \frac{1}{N} \sum\nolimits_i z_i}^2 \right\} \\
    &= \1 \otimes \prox_{\rho h / N}\left(\frac{1}{N} \sum\nolimits_i z_i \right)
\end{align*}
where (i) follows by using $\x = \1 \otimes x$ to enforce the consensus constraints, (ii) follows by manipulations of the quadratic regularization and adding and summing terms that do not depend on $x$. \qed

\section{Proofs of \Cref{sec:convergence}}\label{proof:sec:convergence}

\subsection{Proof of \cref{pr:algorithm-contractive}}\label{proof:pr:algorithm-contractive}
\textit{1) Fixed point}: We start by showing that $[ \bar{\x}^\top, \bar{\z}^\top ]^\top$ is indeed a fixed point for $\T$.
By \cref{lem:prs}, the PRS applied to~\cref{eq:distributed-problem} is contractive, since $f \in \scs{\lmin}{\lmax}{\R^{n N}}$, and its fixed point $\bar{\z}$ is such that
$
    \bar{\x} = \prox_{\rho g}(\bar{\z}) = \prox_{\rho f}(2\bar{\x} - \bar{\z}).
$
Therefore, defining
$
    d(\x; \z) = f(\x) + \frac{1}{2 \rho} \norm{\x - \z}^2
$
we have
\begin{align*}
    \bar{\x} &= \argmin_\x d(\x; \refl_{\rho g}(\bar{\z})) \\
    &= \argmin_\x d(\x; 2 \prox_{\rho g}(\bar{\z}) - \bar{\z}).
\end{align*}
But this implies that, if we initialize the local updates of \cref{alg:main-algorithm} at $\bar{\x}, \bar{\z}$, then they will return $\bar{\x}$ itself, since it is the stationary point of $d(\cdot; \refl_{\rho g}(\bar{\z}))$.
This fact, in addition to the fact that $\bar{\z}$ is the fixed point of the PRS~\cref{eq:prs-fed}, proves that indeed
$
    [ \bar{\x}^\top, \bar{\z}^\top ]^\top = \T [ \bar{\x}^\top, \bar{\z}^\top ]^\top.
$

\textit{2) Contractiveness}: We proceed now by proving that $\T$ is contractive, and characterizing its contraction rate. We do this by deriving bounds to the distances $\norm{\x_{k+1} - \bar{\x}}$ and $\norm{\z_{k+1} - \bar{\z}}$.

We start by providing the following auxiliary result. By the discussion above we know that $\bar{\x} = \argmin_\x d(\x; \refl_{\rho g}(\bar{\z}))$, and we can similarly define the exact local update~\cref{eq:prs-fed-x} by $\bar{\x}_{k+1} = \argmin_\x d(\x; \refl_{\rho g}(\z_k))$. $\bar{\x}_{k+1}$ thus represents the exact local update when applying PRS, which instead is approximated by $\Ne$ local training steps in \alg.
But since $d$ is $(\lmin + 1/\rho)$-strongly convex in $\x$, we can apply the implicit function theorem \cite[Theorem~1B.1]{doncev_implicit_2014} to prove that
\begin{align}
    \norm{\bar{\x}_{k+1} - \bar{\x}} &\leq \frac{1}{\lmin + 1/\rho} \norm{\refl_{\rho g}(\z_k) - \refl_{\rho g}(\bar{\z})} \nonumber \\
    &\overset{(i)}{\leq} \frac{1}{\lmin + 1/\rho} \norm{\z_k - \bar{\z}} \label{eq:implicit-thm-bound}
\end{align}
where (i) holds by the fact that the reflective operator is non-expansive \cite{bauschke_firmly_2012}.
With this result in place, we can now derive the following chain of inequalities:
\begin{align}
    &\norm{\x_{k+1} - \bar{\x}} \overset{(i)}{\leq} \norm{\x_{k+1} - \bar{\x}_{k+1}} + \norm{\bar{\x}_{k+1} - \bar{\x}} \nonumber \\
    &\qquad \overset{(ii)}{\leq} \norm{\x_{k+1} - \bar{\x}_{k+1}} + \frac{1}{\lmin + 1/\rho} \norm{\z_k - \bar{\z}} \nonumber \\
    &\qquad \overset{(iii)}{\leq} \chi^\Ne \norm{\x_k - \bar{\x}_{k+1}} + \frac{1}{\lmin + 1/\rho} \norm{\z_k - \bar{\z}} \label{eq:x-bound-temp}
\end{align}
where (i) holds by triangle inequality, (ii) by~\cref{eq:implicit-thm-bound}, and (iii) by contractiveness of the local solvers.
By again using the triangle inequality and~\cref{eq:implicit-thm-bound} we have
\begin{align}
    \norm{\x_k - \bar{\x}_{k+1}} &\leq \norm{\x_k - \bar{\x}} + \norm{\bar{\x} - \bar{\x}_{k+1}} \nonumber \\
    &\leq \norm{\x_k - \bar{\x}} + \frac{1}{\lmin + 1/\rho} \norm{\z_k - \bar{\z}}, \label{eq:x-bound-1}
\end{align}
and substituting into~\cref{eq:x-bound-temp} we get
\begin{equation}\label{eq:x-bound}
    \norm{\x_{k+1} - \bar{\x}} \leq \chi^\Ne \norm{\x_k - \bar{\x}} + \frac{1 + \chi^\Ne}{\lmin + 1/\rho} \norm{\z_k - \bar{\z}}.
\end{equation}

We turn now to bounding $\norm{\z_{k+1} - \bar{\z}}$. By the updates characterizing \cref{alg:main-algorithm} we can write
\begin{align*}
    \z_{k+1} &= \z_k + 2 \left( \x_{k+1} - \prox_{\rho g}(\z_k) \right) \\
    &= \z_k + 2 \left( \bar{\x}_{k+1} - \prox_{\rho g}(\z_k) \right) + 2 (\x_{k+1} - \bar{\x}_{k+1}) \\
    &= \T_{\mathrm{PR}} \z_k + 2 (\x_{k+1} - \bar{\x}_{k+1})
\end{align*}
where recall that $\bar{\x}_{k+1}$ denotes the exact local updates, and $\T_{\mathrm{PR}}$ denotes the exact PRS.

We can then derive the following bound:
\begin{align}
    &\norm{\z_{k+1} - \bar{\z}} \overset{(i)}{\leq} \norm{\T_{\mathrm{PR}} \z_k - \bar{\z}} + 2 \norm{\x_{k+1} - \bar{\x}_{k+1}} \nonumber \\
    &\qquad \overset{(ii)}{\leq} \zeta \norm{\z_k - \bar{\z}} + 2 \norm{\x_{k+1} - \bar{\x}_{k+1}} \nonumber \\
    &\qquad \overset{(iii)}{\leq} \left( \zeta + \frac{2 \chi^\Ne}{\lmin + 1/\rho} \right) \norm{\z_k - \bar{\z}} + 2 \chi^\Ne \norm{\x_k - \bar{\x}} \label{eq:z-bound}
\end{align}
where (i) holds by triangle inequality, (ii) by the PRS operator $\T_{\mathrm{PR}}$ being contractive by \cref{lem:prs}, and (iii) by using the following bound
\begin{align*}
    &\norm{\x_{k+1} - \bar{\x}_{k+1}} \overset{(iv)}{\leq} \chi^\Ne \norm{\x_k - \bar{\x}_{k+1}} \\
    &\qquad \overset{(v)}{\leq} \chi^\Ne \left( \norm{\x_k - \bar{\x}} + \frac{1}{\lmin + 1/\rho} \norm{\z_k - \bar{\z}} \right),
\end{align*}
where (iv) holds by contractiveness of the local solvers, and (v) by \cref{eq:x-bound-1}.

Finally, putting~\cref{eq:x-bound,eq:z-bound} together yields
$
    \begin{bmatrix}
        \norm{\x_{k+1} - \bar{\x}} \\ \norm{\z_{k+1} - \bar{\z}}
    \end{bmatrix} \leq S
    \begin{bmatrix}
        \norm{\x_k - \bar{\x}} \\ \norm{\z_k - \bar{\z}}
    \end{bmatrix}
$
and taking the square norm on both sides
\begin{align*}
    &\norm{\x_{k+1} - \bar{\x}}^2 + \norm{\z_{k+1} - \bar{\z}}^2 \leq \norm{S
    \begin{bmatrix}
        \norm{\x_k - \bar{\x}} \\ \norm{\z_k - \bar{\z}}
    \end{bmatrix}}^2 \\
    &\qquad \leq \norm{S}^2 \left( \norm{\x_k - \bar{\x}}^2 + \norm{\z_k - \bar{\z}}^2 \right).
\end{align*}
By \cref{def:contractive-operator} this proves that \cref{alg:main-algorithm} is contractive. \qed

\subsection{Proof of \cref{lem:stable-s}}\label{proof:lem:stable-s}
We observe that the eigenvalues of $S$ are $(1/2) t \pm \sqrt{v}$, where
$
    t := \zeta + \chi^\Ne \left(1 + \frac{2}{\lmin + 1/\rho}\right),
$
$
    v:= t^2 - 4 \chi^\Ne \left(\zeta - \frac{2}{\lmin + 1/\rho}\right);
$
moreover, these eigenvalues are real since $v > 0$ by non-negativity of $\chi$, $\zeta$, and $\frac{2}{\lmin + 1/\rho}$.
The goal then is to show that there exists at least a choice of parameters for which $-1 < (1/2) t \pm \sqrt{v} < 1$. With some simple algebra, it is possible to see that stability is achieved if the inequalities  $(t + 2)^2 > v$ and $(t - 2)^2 < v$ are verified. The first one is always satisfied by non-negativity, while the second one requires that $(1 - \zeta) (1 - \chi^\Ne) < 4 \chi^\Ne / (\lmin + 1/\rho)$.
Picking $\Ne = 1$, this inequality is satisfied if $\chi > \frac{1 - \zeta}{1 - \zeta + 4 / (\lmin + 1/\rho)}$, which in turn is satisfied since $\chi$ can vary in $(0, 1)$ by the choice of \textit{e.g.} the step-size $\gamma$ for gradient descent. \qed

\subsection{Proof of \cref{pr:convergence}}\label{proof:pr:convergence}
If \cref{pr:algorithm-contractive} is verified, then the algorithm is contractive, and under \cref{as:stochastic-setup} we can apply the stochastic Banach-Picard of \cref{lem:stochastic-banach-picard}. \qed

\subsection{Proof of \cref{pr:convergence-2}}\label{proof:pr:convergence-2}
We start by proving the following auxiliary lemma.

\begin{lemma}[Contraction of accelerated gradient]\label{lem:contraction-accelerated}
Let $f \in \scs{\lmin}{\lmax}{\R^n}$, then the accelerated gradient descent
\begin{align*}
    \x^{\ell+1} &= \w^\ell - \frac{1}{\lmax} \nabla f(\w^\ell) \\
    \w^{\ell+1} &= \x^{\ell+1} + \frac{\sqrt{\lmax} - \sqrt{\lmin}}{\sqrt{\lmax} + \sqrt{\lmin}} (\x^{\ell+1} -\x^\ell)
\end{align*}
with initial conditions $\x^0 = \w^0$ is such that
$$
    \norm{\x^\ell - \bar{\x}} \leq \left( 1 + \lmax / \lmin \right) \left( 1 - \sqrt{\lmin / \lmax} \right)^\ell \norm{\x^0 - \bar{\x}}
$$
where $\bar{\x}$ is the unique minimum of $f$.
Therefore,
$
    \ell > \frac{\log(1 + \lmax / \lmin)}{| \log(1 - \sqrt{\lmin / \lmax}) |}
$
is a sufficient condition for the accelerated gradient to be contractive.
\end{lemma}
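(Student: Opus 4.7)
The plan is to invoke a standard convergence result for Nesterov's accelerated gradient method for strongly convex smooth objectives (as found in \cite{daspremont_acceleration_2021}), which yields linear decay of a Lyapunov-type potential, and then translate that into a bound on $\norm{\x^\ell - \bar{\x}}$ using strong convexity on one side and smoothness on the other.

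More concretely, I would first recall the Lyapunov analysis: for the iterates $(\x^\ell, \w^\ell)$ of the scheme in the statement, there is a standard potential of the form $V_\ell := f(\x^\ell) - f(\bar{\x}) + \tfrac{\lmin}{2}\norm{\z^\ell - \bar{\x}}^2$ (with $\z^\ell$ the auxiliary sequence implicit in Nesterov's method) which satisfies $V_{\ell+1} \leq (1 - \sqrt{\lmin/\lmax}) V_\ell$. Iterating gives $V_\ell \leq (1 - \sqrt{\lmin/\lmax})^\ell V_0$. The initialization $\x^0 = \w^0$ ensures that $V_0$ reduces to a multiple of $\norm{\x^0 - \bar{\x}}^2$.

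From there, the two-sided sandwich is routine: by $\lmin$-strong convexity, $f(\x^\ell) - f(\bar{\x}) \geq \tfrac{\lmin}{2}\norm{\x^\ell - \bar{\x}}^2$, so $\tfrac{\lmin}{2}\norm{\x^\ell - \bar{\x}}^2 \leq V_\ell$; by $\lmax$-smoothness, $V_0 \leq \tfrac{\lmax + \lmin}{2}\norm{\x^0 - \bar{\x}}^2$. Combining these and rearranging the factor $(\lmax + \lmin)/\lmin = 1 + \lmax/\lmin$ yields the claimed inequality in the lemma's form. The final assertion about sufficient contractiveness is then purely algebraic: setting the right-hand coefficient $(1 + \lmax/\lmin)(1 - \sqrt{\lmin/\lmax})^\ell < 1$ and solving for $\ell$ gives $\ell > \log(1 + \lmax/\lmin)/|\log(1 - \sqrt{\lmin/\lmax})|$.

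The main obstacle I anticipate is matching the precise exponent in the statement: the most commonly quoted Nesterov bound decays as $(1 - \sqrt{\lmin/\lmax})^\ell$ on the \emph{squared} norm, which translates to $(1 - \sqrt{\lmin/\lmax})^{\ell/2}$ on $\norm{\x^\ell - \bar{\x}}$, whereas the statement asserts the stronger-looking rate $(1 - \sqrt{\lmin/\lmax})^\ell$ directly on the norm (at the cost of a larger constant $1 + \lmax/\lmin$ instead of $\sqrt{1 + \lmax/\lmin}$). Reconciling this will require either citing the specific variant in \cite{daspremont_acceleration_2021} that is stated with this constant/rate split, or re-deriving the potential-function argument in a form that keeps the rate on the norm itself. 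Aside from this bookkeeping, no other step is nontrivial.
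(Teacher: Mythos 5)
Your approach is the same as the paper's: the paper defines the potential $\phi^\ell = f(\x^\ell) - f(\bar{\x}) + \tfrac{\lmin}{2}\norm{\w^\ell - \bar{\x}}^2$, invokes \cite[Theorem~4.14]{daspremont_acceleration_2021} to get $\phi^{\ell+1} \leq (1-\sqrt{\lmin/\lmax})\,\phi^\ell$, upper-bounds $\phi^0 \leq \tfrac{\lmax+\lmin}{2}\norm{\x^0-\bar{\x}}^2$ via smoothness and $\x^0=\w^0$, and lower-bounds $\phi^\ell$ by $\tfrac{\lmin}{2}$ times a squared distance (the paper uses the $\norm{\w^\ell-\bar{\x}}^2$ term of $\phi^\ell$ directly, whereas you use strong convexity to get $\tfrac{\lmin}{2}\norm{\x^\ell-\bar{\x}}^2 \leq \phi^\ell$, which more cleanly targets the quantity $\norm{\x^\ell - \bar{\x}}$ actually appearing in the statement). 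The ``obstacle'' you flag is genuine and is \emph{not} resolved by the paper's proof either: the sandwich yields $\norm{\x^\ell - \bar{\x}}^2 \leq (1+\lmax/\lmin)(1-\sqrt{\lmin/\lmax})^\ell \norm{\x^0-\bar{\x}}^2$, and the lemma's displayed inequality follows only by omitting the square root; the honest conclusion is $\norm{\x^\ell-\bar{\x}} \leq \sqrt{1+\lmax/\lmin}\,(1-\sqrt{\lmin/\lmax})^{\ell/2}\norm{\x^0-\bar{\x}}$. Note, however, that the final claim of the lemma survives unchanged, since $(1+\kappa)\zeta^\ell < 1$ if and only if $\sqrt{1+\kappa}\,\zeta^{\ell/2} < 1$, so the sufficient threshold on $\ell$ for contractiveness is identical under either form (the corrected constants would, though, propagate into the definition of $\chi(\Ne)$ in \cref{pr:convergence-2}).
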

\begin{proof}
Define
$
    \phi^\ell = f(\x^\ell) - f(\bar{\x}) + \frac{\lmin}{2} \norm{\w^\ell - \bar{\x}}^2,
$
then by \cite[Theorem~4.14]{daspremont_acceleration_2021} we have that $\phi^{\ell+1} \leq \zeta \phi^\ell$ with $\zeta = 1 - \sqrt{\lmin / \lmax}$.
Now, by smoothness of $f$ and the fact that $\nabla f(\bar{\x})$ we have $f(\x^0) - f(\bar{\x}) \leq (\lmax / 2) \norm{\x^0 - \bar{\x}}^2$, which implies
$
    \phi^0 \leq \frac{\lmax + \lmin}{2} \norm{\w^0 - \bar{\x}}^2.
$
Therefore, using $\phi^{\ell+1} \leq \zeta \phi^\ell$ recursively we get
$
    \phi^\ell \leq \zeta^\ell \phi^0 \leq \frac{\lmax + \lmin}{2} \norm{\w^0 - \bar{\x}}^2
$
and, together with the fact $(\lmin / 2) \norm{\w^\ell - \bar{\x}}^2 \leq \phi^\ell$ (which holds by definition of $\phi^\ell$), the thesis follows.
The algorithm is then contractive w.r.t. $\x$ whenever a suitably large $\ell$ is chosen.
\end{proof}

\smallskip



The contractiveness of \cref{alg:main-algorithm} with accelerated gradient as local solver follows along the lines of the proof of \cref{pr:algorithm-contractive}, with $\chi^\Ne$ being replaced by $\chi(\Ne)$. Indeed, by \cref{lem:contraction-accelerated} above, the local update map $\X'$ verifies
$$
    \norm{\X(\x, \z_k) - \X(\y, \z_k)} \leq \chi(\Ne) \norm{\x - \y}, \quad \forall \x, \y \in \R^n.
$$
The convergence under \cref{as:stochastic-setup} is thus another consequence of \cref{lem:stochastic-banach-picard}. \qed

\section{Proofs of \Cref{sec:privacy}}\label{proof:sec:privacy}

\subsection{Proof of \cref{pr:privacy}}\label{proof:pr:privacy}
We denote by ${\x}_{k}:= [{x}^\top_{1,k}, \ldots, {x}^\top_{N,k}]^\top$ and ${\x}_{k}':= [{x}'^\top_{1,k}, \ldots, {x}'^\top_{N,k}]^\top$ the iterates produced by all the agents at the $k$'s iteration based on the neighboring datasets $\mathcal{D}$ and $\mathcal{D}'$, respectively. Furthermore, let $\boldsymbol{ X}_{k}$ and $\boldsymbol{ X}'_{k}$ be the corresponding random variables that model ${\x}_{k}$ and ${\x}_{k}'$. 
We abuse the notation to also denote their probability distributions by $\boldsymbol{ X}_{k}$ and $\boldsymbol{ X}'_{k}$. We are interested in the worst-case R\'enyi divergence between the output distributions based on $\mathcal{D}$ and $\mathcal{D}'$, i.e.,
$
{\rm D}_\lambda(\boldsymbol{ X}_{K} ||\boldsymbol{ X}'_{K})
$.
Since each agent generates the Gaussian noise independently, $x_{i,k}$ is independent between different $i$. In addition, two neighboring datasets $\mathcal{D}$ and $\mathcal{D}'$ only have at most one data point that is different. Without loss of generality, we assume this different data point is possessed by agent $i$, which has local dataset size of $q_i$. Then, we obtain from \cite[Theorem 27]{van2014renyi} that
$
{\rm D}_\lambda(\boldsymbol{ X}_{ K} ||\boldsymbol{ X}'_{K}) = {\rm D}_\lambda(\boldsymbol{ X}_{i,K} ||\boldsymbol{ X}'_{i,K}).
$

Next, we follow the approach in \cite{chourasia2021differential} to quantify the privacy loss during the local training process at the $k$'s iteration, that is, ${\rm D}_\lambda(\boldsymbol{ X}_{i,k+1} ||\boldsymbol{ X}'_{i,k+1}) - {\rm D}_\lambda(\boldsymbol{ X}_{i,k} ||\boldsymbol{ X}'_{i,k})$. Recall that the local training update at the $\ell$-th step reads
\begin{equation}\label{eq:noisy_GD}
	\begin{split}
		& w_{i,k}^{\ell+1} = w_{i,k}^\ell - \gamma \nabla d_{i,k}(w_{i,k}^\ell) + t_{i,k}^\ell 
	\end{split}
\end{equation}
where $	w_{i,k}^0= x_{i,k}$ and $t_{i,k}^\ell \sim \sqrt{2\gamma} \mathcal{N}(0,\tau^2{I}_n)$. Note that  $d_{i,k}$ is $(\lmin+\rho^{-1})$-strongly convex and $(\lmax+\rho^{-1})$-smooth.
Let $\boldsymbol{W}_{i,k}^{\gamma \ell}$ and $\boldsymbol{ W}_{i,k}'^{\gamma \ell}$ be the probability distributions that model $w_{i,k}^{\ell}$ and $w_{i,k}'^{\ell}$. During the time interval $\gamma \ell<t<\gamma(\ell+1)$, the updates in~\cref{eq:noisy_GD} on the datasets $\mathcal{D}$ and $\mathcal{D'}$ are modeled using the following stochastic processes.
\begin{equation}\label{eq:diffusion_process}
	\left\{
	\begin{aligned}
		\boldsymbol{ W}_{ i,k}^t  &=   	\boldsymbol{ W}_{ i,k}^{\gamma \ell} - \gamma U_1(\boldsymbol{ W}_{ i,k}^{\gamma \ell}) - (t-\gamma \ell)  U_2(\boldsymbol{ W}_{ i,k}^{\gamma \ell}) \\
		& \quad + \sqrt{2(t-\gamma \ell)\tau^2 } \boldsymbol{G}^\ell \\
	  	\boldsymbol{ W}_{ i,k}'^t &=   	\boldsymbol{ W}_{ i,k}'^{\gamma \ell} - \gamma U_1(\boldsymbol{ W}_{ i,k}'^{\gamma \ell}) + (t-\gamma \ell)  U_2(\boldsymbol{ W}_{ i,k}'^{\gamma \ell})\\
	  	& \quad + \sqrt{2(t-\gamma \ell)\tau^2 } \boldsymbol{G}^\ell
	\end{aligned}
	\right.
\end{equation}
where $\boldsymbol{G}^\ell \sim \mathcal{N}(0,{I}_n) $, $U_1(w_{i,k})= ( \nabla d_{i,k}^{\mathcal{D}}(w_{i,k}) + \nabla d_{i,k}^{\mathcal{D}'}(w_{i,k})   )/2$ and  $U_2(w_{i,k})= ( \nabla d_{i,k}^{\mathcal{D}}(w_{i,k}) - \nabla d_{i,k}^{\mathcal{D}'}(w_{i,k})   )/2$. Conditioned on observing $w_{i,k}^{\ell}$, $w_{i,k}'^{\ell}$ and $v_{i,k}$, the diffusion processes in~\cref{eq:diffusion_process} have the following stochastic differential equations
\begin{equation*}
	\begin{split}
		d\boldsymbol{ W}_{ i,k}^t & = - U_2(w_{i,k}^{\ell})dt + \sqrt{2\tau^2} d \boldsymbol{Z}^t \\
			d\boldsymbol{ W}_{ i,k}'^t & = - U_2(w_{i,k}'^{\ell})dt + \sqrt{2\tau^2} d \boldsymbol{Z}^t
	\end{split}
\end{equation*}
where $d \boldsymbol{Z}^t \sim \sqrt{dt}\mathcal{N}(0,I_n)$ describes the Wiener process. 

For the training process at $k$'s iteration, we use Assumption \ref{assu:sensitivity} and a similar argument with (67) in \cite[Theorem 2]{chourasia2021differential} to obtain
\begin{equation*}
\begin{split}
& \frac{{\rm D}_\lambda(\boldsymbol{ W}^{\Ne}_{i,k} ||\boldsymbol{ W}'^{\Ne}_{i,k})}{\lambda} - \frac{L^2}{  \lmin \tau^2 q_i^2}  \\
&\leq   \left(  \frac{ {\rm D}_{\lambda_{k}}(\boldsymbol{ W}^0_{i,k} ||\boldsymbol{ W}'^0_{i,k}) }{\lambda_k} - \frac{L^2}{ \lmin \tau^2 q_i^2} \right) e^{-\lmin\gamma \Ne/2} 
\end{split}
\end{equation*}
for some $\lambda_k >1$. Because of lines 5 and 9 in \cref{alg:main-algorithm}, there holds ${\rm D}_{\lambda_k}(\boldsymbol{ W}_{i,k}^{0}|| \boldsymbol{ W}_{i,k}'^{0} ) = {\rm D}_{\lambda_k}(\boldsymbol{ X}_{i,k} ||\boldsymbol{ X}'_{i,k})$ and ${\rm D}_\lambda(\boldsymbol{ W}_{i,k}^{\Ne}|| \boldsymbol{ W}_{i,k}'^{\Ne} ) = {\rm D}_\lambda(\boldsymbol{ X}_{i,k+1} ||\boldsymbol{ X}'_{i,k+1})$. By iterating over $k=0,\dots, K-1$, we obtain 
\begin{equation*}
\begin{split}
& \frac{{\rm D}_\lambda(\boldsymbol{ X}_{i,K} ||\boldsymbol{ X}'_{i,K})}{\lambda} - \frac{L^2}{ \lmin \tau^2 q_i^2}  \\
&\leq   \left(  \frac{ {\rm D}_{\lambda_{0}}(\boldsymbol{ X}_{i,0} ||\boldsymbol{ X}'_{i,0}) }{\lambda_0} - \frac{L^2}{ \lmin \tau^2 q_i^2} \right) e^{-\lmin\gamma K \Ne/2} 
\end{split}
\end{equation*}
for some $\lambda_0$.
Recall that $x_{i,0}=x'_{i,0}$, we arrive at

\begin{equation*}
\begin{split}
& {\rm D}_\lambda(\boldsymbol{ X}_{K} ||\boldsymbol{ X}'_{K})  =	{\rm D}_\lambda(\boldsymbol{ X}_{i,K} ||\boldsymbol{ X}'_{i,K}) \\
&\leq  \frac{\lambda L^2}{\lmin \tau^2 q_i^2}\left(1-e^{-\lmin\gamma K\Ne/2} \right),
\end{split}
\end{equation*}
which quantifies the differential privacy of agent $i$. Moreover, we can characterize the global differential privacy with the worst case bound
$
{\rm D}_\lambda(\boldsymbol{ X}_{K} ||\boldsymbol{ X}'_{K})  =	{\rm D}_\lambda(\boldsymbol{ X}_{i,K} ||\boldsymbol{ X}'_{i,K}) \leq  \frac{\lambda L^2}{\lmin \tau^2 \ubar{q}^2}\left(1-e^{-\frac{\lmin\gamma K\Ne}{2}} \right).
$

\subsection{Proof of \cref{cor:privacy-utility}}\label{proof:cor:privacy-utility}
The result follows as a consequence of \cref{pr:convergence} by showing that \cref{alg:main-algorithm} with noisy gradient descent as local training method can be written as~\cref{eq:prs-fed-rand}.
Denote by $\mathcal{X}(\x_k, \z_k)$ and $\hat{\mathcal{X}}(\x_k, \z_k)$ the output of the local training step when gradient, respectively noisy gradient, is used. Then we can characterize \cref{alg:main-algorithm} by the following update
\begin{align*}
    \begin{bmatrix}
        \x_{k+1} \\ \z_{k+1}
    \end{bmatrix} =
    \begin{bmatrix}
        \X(\x_k, \z_k) \\
        \z_k + 2 (\X(\x_k, \z_k) - \prox_{\rho g}(\z_k))
    \end{bmatrix} + \e_k
\end{align*}
where
$
    \e_k := \begin{bmatrix} 1 \\ 2 \end{bmatrix} \otimes \left( \hat{\X}(\x_k, \z_k) - \X(\x_k, \z_k) \right);
$
the goal now is to bound $\ev{\norm{\e_k}}$.

\smallskip

Let us define the gradient descent operator applied during the local training of iteration $k$ as
$
    \mathcal{G}_k \w = \w - \gamma \left( \nabla f(\w) + (\w - \vv_k) \rho \right).
$
By the choice of step-size $\gamma < 2/(\lmax + \rho^{-1})$ we know that $\mathcal{G}_k$ is $\chi$-contractive, $\chi = \max\{ |1 - \gamma (\lmin+1/\rho)|, |1 - \gamma (\lmax+1/\rho)| \}$ (cf. \cref{lem:gradient-descent}).
We can then denote the trajectories generated by the gradient descent and noisy gradient descent, for $\ell \in \{ 0, \ldots, \Ne-1 \}$, by
$
    \w_k^{\ell+1} = \mathcal{G}_k \w_k^\ell \quad \text{and} \quad \hat{\w}_k^{\ell+1} = \mathcal{G}_k \hat{\w}_k^\ell + \tv_k^\ell
$
with $\tv_k^\ell \sim \sqrt{2 \gamma} \mathcal{N}(0, \tau^2 \Im_{n N})$. Therefore we have $\w_k^\Ne = \X(\x_k, \z_k)$ and $\hat{\w}_k^\Ne = \hat{\X}(\x_k, \z_k)$, as well as $\w_k^0 = \hat{\w}_k^0 = \x_k$.

We have therefore the following chain of inequalities
\begin{align}
    \Big\lVert \hat{\X} & (\x_k, \z_k) - \X(\x_k, \z_k) \Big\lVert = \norm{\hat{\w}_k^\Ne - \w_k^\Ne} \nonumber \\
    &\qquad = \norm{\mathcal{G}_k \hat{\w}_k^{\Ne-1} + \tv_k^{\Ne-1} - \mathcal{G}_k \w_k^{\Ne-1}} \nonumber \\
    &\qquad \overset{(i)}{\leq} \norm{\mathcal{G}_k \hat{\w}_k^{\Ne-1} - \mathcal{G}_k \w_k^{\Ne-1}} + \norm{\tv_k^{\Ne-1}} \nonumber \\
    &\qquad \overset{(ii)}{\leq} \chi \norm{\hat{\w}_k^{\Ne-1} - \w_k^{\Ne-1}} + \norm{\tv_k^{\Ne-1}} \nonumber \\
    &\qquad \overset{(iii)}{\leq}\chi^\Ne \norm{\x_k - \x_k} + \sum_{h = 0}^{\Ne - 1} \chi^{\Ne - h - 1} \norm{\tv_k^h} \label{eq:temp-bound}
\end{align}
where (i) holds by triangle inequality, (ii) by contractiveness of $\mathcal{G}_k$, and (iii) by iterating (i) and (ii), and using $\w_k^0 = \hat{\w}_k^0 = \x_k$.

Now, using $\tv_k^\ell \sim \sqrt{2 \gamma} \mathcal{N}(\0_{n N}, \tau^2 \Im_{n N})$ implies $\ev{\norm{\tv_k^\ell}} \leq \tau \sqrt{2 \gamma n N}$ \cite[Lemma~1]{bastianello_distributed_2021}, and taking the expectation of~\eqref{eq:temp-bound} yields
$
    \ev{\norm{\e_k}} \leq \sqrt{5} \tau \sqrt{2 n N \gamma} \frac{1 - \chi^\Ne}{1 - \chi}.
$
Therefore, applying \cref{pr:convergence} the thesis follows. \qed

\bibliographystyle{ieeetr}
\bibliography{references}


\end{document}